\title[Adversarial Online Collaborative Filtering]{Adversarial Online Collaborative Filtering}
\newcommand{\knds}{\kern-\nulldelimiterspace}
\definecolor{verylightgray}{rgb}{0.8,0.8,0.8}
\definecolor{am}{rgb}{0.6, 0.29, 0.0}
\definecolor{ao}{rgb}{0.0, 0.5, 0.0}
\definecolor{blue-violet}{rgb}{0.54, 0.17, 0.89}
\definecolor{dartmouthgreen}{rgb}{0.05, 0.5, 0.06}
\definecolor{brown}{rgb}{0.6, 0.29, 0.0}
\newcommand\MJH[1]{\textcolor{blue}{{\bf MJH:}{\small \texttt{~#1~}}}}
\newcommand\FV[1]{\textcolor{brown}{{\bf FV:}{\small \texttt{~#1~}}}}
\newcommand\CG[1]{\textcolor{blue-violet}{{\bf CG:}{\small \texttt{~#1~}}}}
\newcommand\comm[1]{\textcolor{ao}{{\small \texttt{// #1}}}}
\newcommand{\bs}[1]{\boldsymbol{#1}}
\newcommand{\nc}[1]{\newcommand{#1}}
\nc{\nmu}{M}
\nc{\nma}{N}
\nc{\us}{[\nmu]}
\nc{\as}{[\nma]}
\nc{\emp}[1]{\emph{#1}}
\nc{\lik}{\sim}
\nc{\au}{i}
\nc{\ai}{j}
\nc{\ntr}{T}
\nc{\tim}{t}
\nc{\nat}{\mathbb{N}}
\nc{\ut}[1]{i_{#1}}
\nc{\itt}[1]{j_{#1}}
\nc{\timp}{s}
\nc{\na}[1]{[#1]}
\nc{\be}{\begin{equation*}}
\nc{\ee}{\end{equation*}}
\nc{\mis}{\mu}
\nc{\indi}[1]{\llbracket #1 \rrbracket}
\nc{\nlik}{\not\sim}
\nc{\re}{\lambda}
\nc{\ntu}[1]{\tau(#1)}
\nc{\tlik}[1]{\phi(#1)}
\nc{\reg}{R}
\nc{\eqi}{\equiv}
\nc{\uc}{C}
\nc{\ic}{D}
\nc{\algn}{\textsc{Orca}}
\nc{\bigo}[1]{\mathcal{O}(#1)}
\nc{\expt}[1]{\mathbb{E}[#1]}
\nc{\random}{\textsc{Random}}
\nc{\wrmf}{\textsc{Wrmf}}
\nc{\pop}{\textsc{Pop}}
\nc{\orcapop}{\textsc{OrcaPop}}
\nc{\lel}{\ell^*}
\nc{\lii}{\ell'}
\nc{\lij}{\ell''}
\nc{\lir}[1]{r_{#1}}
\nc{\lis}[1]{C_{#1}}
\nc{\lit}[1]{D_{#1}}
\nc{\lsg}[1]{S_{#1}}
\nc{\pos}[1]{\ell_{#1}}
\nc{\usd}[1]{\Phi(#1)}
\nc{\nequ}{\not\eqi}
\nc{\la}{\leftarrow}
\nc{\ben}{\begin{enumerate}}
\nc{\een}{\end{enumerate}}
\nc{\sig}{\Psi}
\nc{\cus}{Z}
\nc{\cpo}{\ell}
\nc{\uco}{\mathcal{R}}
\nc{\lat}{\Leftarrow}
\nc{\nn}{w}
\nc{\lil}{\approx}
\nc{\ucr}{\phi}
\nc{\icr}{\psi}
\nc{\ars}{W}
\nc{\ext}[1]{m}
\nc{\nul}[1]{c_{#1}}
\nc{\nnul}[2]{c_{#1,#2}}
\nc{\uca}{\textsc{UC}}
\nc{\ica}{\textsc{IC}}
\nc{\iic}[1]{B_{#1}}
\nc{\ius}[1]{A_{#1}}
\nc{\nic}[1]{Q_{#1}}
\nc{\nus}[1]{P_{#1}}
\nc{\pfin}{\Lambda}
\nc{\neqi}{\not\eqi}
\nc{\nis}[1]{\gamma_{#1}}
\nc{\nit}[1]{\delta_{#1}}
\nc{\sif}[1]{\psi_{#1}}
\nc{\sih}[2]{\sigma_{#1,#2}}
\nc{\ome}{\omega}
\nc{\ist}[1]{\mathcal{I}_{#1}}
\nc{\les}[1]{\mathcal{U}_{#1}}
\nc{\zi}[1]{\mathcal{P}_{#1}}
\nc{\lse}{k}
\nc{\crt}{\mathcal{K}}
\nc{\cru}[1]{u_{#1}}
\nc{\exs}{E_{\ucr}}
\nc{\nlil}{\not\lil}
\nc{\ssc}{H}
\nc{\lrel}[2]{L_{#1,#2}}
\nc{\lmat}{\bs{L}}
\nc{\lmap}{\bs{L}^*}
\nc{\lrep}[2]{L^*_{#1,#2}}
\nc{\hta}[1]{\widehat{\omega}_{#1}}
\nc{\hx}[1]{\widehat{\xi}_{#1}}
\nc{\byp}[1]{\mathcal{B}_{#1}}
\nc{\cnt}[1]{d_{#1}}
\nc{\exc}{\mathcal{E}}
\nc{\exi}{\mathcal{F}}
\nc{\coin}{\gamma}
\nc{\ue}{\textsc{UE}}
\nc{\uie}{\textsc{UIE}}
\nc{\uiebold}{{\bf \textsc{UIE}}}
\nc{\algnn}{\algn$^*$}
\nc{\ucp}[1]{\delta_{#1}}
\nc{\icp}[1]{\epsilon_{#1}}
\nc{\nqu}[1]{\omega_{#1}}
\nc{\nli}[1]{\xi_{#1}}
\nc{\bo}[1]{\mathcal{O}(#1)}
\nc{\gi}{\mathcal{G}}
\nc{\fnd}{\mathcal{H}}
\nc{\fnl}{\fnd^*}
\nc{\fnb}{\fnd^{\bullet}}
\nc{\fnc}{\fnd^{\circ}}
\nc{\clus}{\mathcal{K}}
\nc{\cls}{\clus'}
\nc{\cl}[1]{\clus_{#1}}
\nc{\rem}[1]{\rho_{#1}}
\nc{\fbig}{\fnd^{\dag}}
\nc{\fst}{\tau}
\nc{\fsl}{s}
\nc{\fc}{\mu}
\nc{\arbn}{x}
\nc{\fbd}{\mathcal{B}}
\nc{\ap}{\mathcal{A}'}
\begin{document}

\maketitle

\begin{abstract}%
We investigate the problem of online collaborative filtering under no-repetition constraints, whereby users need to be served content in an online fashion and a given user cannot be recommended the same content item more than once. We start by designing and analyzing an
algorithm that works under biclustering assumptions on the user-item preference matrix, and show that this algorithm exhibits an optimal regret guarantee, while being fully adaptive, in that it is oblivious to any prior knowledge about the sequence of users, the universe of items, as well as the biclustering parameters of the preference matrix.
We then propose a more robust version of this algorithm which operates with general matrices. Also this algorithm is parameter free, and we prove regret guarantees that scale with the amount by which the preference matrix deviates from a biclustered structure.
To our knowledge, these are the first results on online collaborative filtering that hold at this level of generality and adaptivity under no-repetition constraints.
Finally, we complement our theoretical findings with simple experiments on real-world datasets aimed at both validating the theory and empirically comparing to standard baselines. This comparison shows the competitive advantage of our approach over these baselines.%
\end{abstract}

\begin{keywords}%
  Online Learning, Collaborative Filtering, No-repetition constraint, Biclustering.%
\end{keywords}

\section{Introduction}\label{s:intro}

Helping customers identify their preferences is essential for businesses with a diverse product offering. Many companies rely on recommendation systems (RS)~\citep{resnick1997recommender}, which allow users to browse, search, or receive suggestions from online services. Recommendation algorithms let us narrow down massive amounts of information into personalized choices.
This is especially relevant in 
online businesses, where the capabilities of interactive RS have become of paramount importance.
Customers can obtain suggestions for movies (e.g., Netflix, YouTube TV), music (e.g., Spotify, YouTube Music), job openings (e.g., LinkedIn, Indeed), or various products (e.g., Amazon, eBay), while their feedback is tracked and exploited to improve future recommendations tailored to specific user interests. 
In most cases, the goal is to improve user experience as measured by the amount of ``likes'' 
given by the user over time. 

A standard approach to content recommendation is the one provided by Collaborative Filtering (CF), where personalized recommendations are generated based on both content data and aggregate user activity.
%
%
%
CF algorithms are either user-based or item-based. 
A user-based CF algorithm suggests items that similar users enjoy. Item-based algorithms suggests items that are similar to items enjoyed by the user in the past. In many practical applications, suggesting an item that has already been consumed is often useless~\citep{bresler2014latent, bresler2016collaborative, ariu2020regret}. For instance, it is pointless to keep advertising the same movie or book to a user after she has already watched or read it. In the recent online CF literature, this assumption is often called the \enquote{\em{no-repetition constraint}}~(e.g., \cite{ariu2020regret}). 

The aim of this paper is to design and analyze novel learning algorithms for online collaborative filtering 
under the no-repetition constraint assumption, still forcing the algorithms to leverage the collaborative effects in the user-item structure. 
In this sense, the algorithms we propose combine both user-based and item-based online CF approaches.

Our learning problem can be described as follows. Learning proceeds in a sequence of interactive {\em trials} (or {\em rounds}). At each round, a single user shows up, and the RS is compelled to recommend content (an individual item) to them. We make no assumptions whatsoever on the way the sequence of users gets generated across rounds. 
%
%
The user then provides feedback encoding their opinion about the selected item, and the RS uses this signal to update its internal state. 
The kind of feedback we expect is the binary click/no-click, thumb up/down, like/dislike, which is very common in online media services (TikTok, YouTube, etc.)
%
%
In any trial, the RS is constrained to recommend to the user at hand an item that {\em has not} been recommended to that user in the past.

In order to leverage non-trivial collaborative effects, we
investigate a latent model of user preferences based on {\em biclustering}~\citep{H72}, or perturbed/noisy versions thereof. 
Specifically, we shall assume that each user falls under one user {\em type} (or {\em cluster}) and that, in the absence of noise, users within the same type prefer the same items. At the same time,
items are also clustered so that, in the absence of noise, all items belonging to the same cluster are liked or disliked in the same fashion by each user.  Modern user-based (item-based) CF algorithms often operate under these assumptions, which are supported by a number of experiments in the RS literature, for both user~\citep{das2007google, bellogin2012using, bresler2014latent, bresler2021regret} and item~\citep{sarwar2001item, bresler2016collaborative, bresler2021regret} clustering.
The set of all user preferences is generally viewed as a matrix called a {\em preference matrix}. A perturbed version of a biclustered matrix is one where the actual preference matrix can be seen as a noisy variant of a biclustered matrix, a scenario we also tackle in our analysis.

\noindent{\bf Our contributions.}
%
We consider a sequential and adversarial learning setting where users and user preferences can be generated {\em arbitrarily}. We initially investigate the case where the preference matrix is perfectly biclustered (Section \ref{s:noisyfree}), and then relax this assumption to allow an adversarial perturbation of the preference matrix (Section \ref{s:noisy}). In all cases, we quantify online performance in terms of cumulative {\em regret}, that is, the extent to which the number of recommendation mistakes made by our algorithms across a sequence of rounds exceeds those made by an omniscient oracle that knows the preference matrix beforehand. In the perfect biclustering case, we fully characterize the problem: we both provide a regret lower bound and describe an algorithm that achieves a regret guarantee that matches this lower bound. In the adversarially perturbed case, we introduce a more robust algorithm that operates with general preference matrices, and whose regret performance is expressed in terms of the degree by which the perturbed preference matrix diverges from a perfectly biclustered one.
Our algorithms are scalable, {\em fully adaptive} (that is, {\em parameter free}), in that they need not know the parameters of the underlying biclustering structure, the time horizon, or the amount of perturbation in the preference matrix. Moreover, the algorithms can be naturally run in situations where both the set of users and the universe of items increase (arbitrarily) over time. 

We then empirically compare (Section \ref{s:experiments}) versions of our algorithm to three baselines (recommendation based on popularity, the Weighted Regularized Matrix Factorization (WRMF) from \cite{koren:icdm08}, and random recommendations) on a real-world benchmark showing that our algorithms exhibit on this benchmark faster learning curves than its competitors.

\noindent{\bf Discussion and related work.}
As far as we are aware, this is the first work that provides theoretical performance guarantees under the no-repetition constraint in a non-stochastic setting where both the sequence of users and the user-item preference matrix are generated adversarially.


Because user preferences can be observed solely for items recommended in the past, we must address the classical problem of how to quickly determine the
users' interests without affecting the quality of our recommendations. 
That is, 
should we provide recommendations according to the users' interests observed thus far, 
or should we obtain new feedback signals so as to better profile the users? This well-known exploitation-exploration dilemma 
has received wide attention in the machine learning and statistics fields. In particular, the research in multi-armed bandit (MAB) problems and methods applied to RS tasks
is interested in how to strike an optimal balance
between exploitation of current knowledge of user preferences and exploration of new potential interests~(see, e.g., the monographs by \cite{bubeck2012regret} and \cite{lattimore2020bandit}).


One thing to emphasize, though, is that
none of the variants of MABs which are readily available in the literature includes all the core elements of the problem we are considering here. An item can only be suggested {\em once} to a user in our setting, while an arm (item) in MAB problems can typically be recommended multiple times.
This is a significant difference between our setting and standard MAB formulations where, once the best arm is discovered for a given user, the problem for that user is deemed to be solved.
%
Clustering~(e.g., \cite{bui2012clustered,maillard2014latent, gentile2014online,  10.1145/2911451.2911548, kwon2017sparse, jedor2019categorized}) as well as low-rank (e.g., \cite{katariya+17,10.1145/3240323.3240408,NEURIPS2020_9b7c8d13,pmlr-v97-jun19a,pmlr-v117-trinh20a,pmlr-v130-lu21a,kang2022efficient}) assumptions are also widespread in the stochastic MAB literature as applied to recommendation problems, but these works do not consider the no-repetition constraint on the items, nor do they address adversarial perturbations of the preference matrix.




The reader is referred to Appendix \ref{sa:related} for further discussion on the connections to matrix completion/factorization and structured bandit formulations.

The closest references to our work are perhaps the works by \cite{bresler2014latent, bresler2016collaborative, ariu2020regret,bresler2021regret}, where online CF problems are investigated under the no-repetition constraint assumption. As in our paper, algorithm performance is measured
by comparing the proposed algorithm against an omniscient RS that knows the preferences of all users on all items. However, \cite{bresler2016collaborative} assumes the sequence of users is generated in a quite benign way (uniformly at random), while in the papers by \cite{bresler2014latent, ariu2020regret,bresler2021regret} all users need to receive a recommendation and provide a feedback simultaneously.

Finally, it is worth mentioning that standard ranking problems, where the RS is required to produce a ranked list of diverse items (see, e.g., classical references, like \cite{10.5555/2567709.2502595}), can be seen as a way to implement the no-repetition constraint, but only within a given user session, not across multiple sessions of the same user. Hence this gives rise a substantially different RS problem than the one we consider here.

\section{Preliminaries, Learning Tasks, and Overview of Results}\label{s:basic}
%
All the tasks considered here involve the recommendation of $\nma$ items to $\nmu$ users. In order to define our problems we must first introduce what it means for a matrix to be \emp{(bi)clustered}.

Given an $\nmu\times \nma$ matrix $\lmat$, we say that two users $\au,\au'\in\na{\nmu}$ are \emp{equivalent} if and only if $L_{\au,\ai}=L_{\au',\ai}$ for all $\ai\in\na{N}$, that is, if and only if the rows corresponding to the two users are identical. Similarly, two items $\ai,\ai'\in\na{\nma}$ are \emp{equivalent} if and only if $L_{\au,\ai}=L_{\au,\ai'}$ for all $\au\in\na{\nmu}$. A matrix $\bs{L}$ is $C$-user clustered and $D$-item clustered if the number of equivalence classes under these equivalence relations are no more than $C$ and $D$, respectively. For brevity, we shall refer to such a matrix as a $(C,D)$-biclustered matrix.

\noindent{\bf The Basic Problem. }
We first describe the simplest version of the problem that we study. We have an unknown binary matrix $\bs{L}$ which is $(C,D)$-biclustered for some unknown $C$ and $D$. We say that user $i$ \emp{likes} item $j$ if and only if $L_{i,j}=1$. Our learning problem proceeds sequentially in trials (or rounds)  $t=1,2,\ldots,T$, where on trial $t$ a learning agent (henceforth called ``Learner")
interacts with its environment as follows:
%
\begin{enumerate}
\item The environment reveals user $i_t$ to Learner;
\item Learner chooses an item $j_t$ to recommend to $i_t$. However, Learner is restricted in that it cannot have recommended item $j_t$ to user $i_t$ on some earlier trial;
\item $L_{\ut{\tim},\itt{\tim}}$ is revealed to Learner.
\end{enumerate}
%
Note that the problem restricts the environment in that a given user cannot be queried more than $\nma$ times. For any trial $t$, if $L_{\ut{\tim},\itt{\tim}}=0$ then user $\ut{\tim}$ does not like item $\itt{\tim}$ and we say that Learner incurs a \emp{mistake}. The aim of Learner is to minimize the total number of mistakes made throughout the $T$ rounds for the given matrix $\bs{L}$ and the sequence of users $i_1,\ldots, i_T$ generated by the environment.

In fact, since the binary matrix $\bs{L}$ can be arbitrary (it may contain a lot of zeros) and the sequence $i_1,\ldots, i_T$ may be generated adversarially, our goal will be to bound the learner's \emp{regret} $R$, which is defined as the difference between the number of mistakes made by Learner and those which would have been obtained by an {\em omniscient} oracle that has a-priori knowledge of $\bs{L}$. Formally, given a user $\au\in\na{\nmu}$, let $\nqu{\au}$ be the number of rounds $t\in\na{\ntr}$ in which $\au=\ut{\tim}$, and let $\nli{\au}$ be the number of items in $\na{\nma}$ that $\au$ likes.
Let us denote here and throughout by the brackets $\indi{\cdot}$ the indicator function of the predicate at argument. The regret $\reg$ is then defined as:
\be
\reg:=\sum_{\tim\in\na{\ntr}}(1-\lrel{\ut{\tim}}{\itt{\tim}})-\sum_{\au\in\na{\nmu}} (\omega_i - \xi_i)\indi{\omega_i \geq \xi_i}~,
\ee
where the first summation is the number of mistakes made by Learner and the second summation is the number of mistakes made by the omniscient oracle for the given sequence of users $i_1,\ldots, i_T$. 
We shall prove for this problem that our 
randomized algorithm \algn\ ({\bf O}ne-time {\bf R}e{\bf C}ommendation {\bf A}lgorithm  -- Section \ref{ss:orcbandit}) has an expected regret bound of the form:
\be
\expt{R} = \mathcal{O}\left(\min\{C,D\}(\nmu+\nma)\right)~,
\ee
and a time complexity of only $\mathcal{O}(\nma)$ per round. We will also prove that the above regret bound is essentially optimal. The algorithm has to be randomized since it is designed to deal with adversarially generated user sequences (the same applies to our second algorithm \algnn).

\noindent{\bf 
General preference matrices. }
We now turn to the problem of incorporating adversarial perturbation into matrix $\lmat$. In this case we will only exploit similarities among items and leave it as an open problem to adapt our methodology to exploit similarities among users. Our algorithm \algnn\ (Section \ref{ss:orcbanditstar}) takes an integer parameter $\icr\geq 2$. 
We shall assume now that we have a \emp{hidden} $D$-item clustered $\nmu\times\nma$ binary matrix $\lmap$ which is perturbed {\em arbitrarily} to form the matrix $\lmat$. The matrix $\lmap$ and parameter $\icr$ induce the following concept of {\em bad} users and {\em bad} items. Recall that $\nqu{i}$ and $\nli{i}$ are the number of times this user is queried and the number of items that it likes, respectively.
\begin{definition}\label{def:1}
Given a user $i\in[M]$, its \emp{perturbation level} $\ucp{i}$ is the number of items $j\in[N]$ in which $\lrel{i}{j}\neq\lrep{i}{j}$.  User $i$ is a \emp{bad} user if and only if both $\ucp{i}>0$ and $\nqu{i}>\nli{i}-2\ucp{i}$ hold.
Given an item $j\in[N]$, its \emp{perturbation level} $\icp{j}$ is the number of users $i\in[M]$ in which $\lrel{i}{j}\neq\lrep{i}{j}$. Item $j$ is a \emp{bad} item if and only if $\icp{j}>\icr$, for the given value of $\icr$.
\end{definition}
Note that the parameter $\icr$ does affect the definition of a bad item. 
In particular, when $\icr$ increases, the number of bad items decreases, and it does so in a way that depends on the structure of the (unknown) preference matrix. We can view $\icr$ as a tolerance of the algorithm to item perturbation. A bad user is one which the learner is compelled to serve content ``too often". Observe that this notion not only depends on the difference between $\bs{L}$ and $\bs{L^*}$, but also on the specific sequence of users $i_1,\ldots,i_T$ generated by the environment. We also stress that in relevant real-world scenarios there will often be no bad users. E.g., there are usually many more books that a person would enjoy than those they have time to read.

Given that we have $m$ bad users and
$n = n(\psi)$ bad items, \algnn\ enjoys the following regret bound:
\begin{align*}
\expt{\reg} = {\mathcal O}\Bigl(\min\Bigl\{ (D\icr+m+n)(M+N), (D+n+m/\icr)(M+N\icr)\Bigl\}\Bigl)\,.
\end{align*}
Note that the two terms in the above minimum are generally incomparable, even when solely viewed as a function of parameter $\psi$. When $M=\bo{N}$ the first term is better due to the reduced influence of $n$, whilst when $M=\Omega(N\icr)$ the second term is better. Hence the above bound expresses a best-of-both-worlds guarantee which is independent of the relative size of $M$ and $N$.
It is also instructive to consider how the two terms change as a function of $\icr$. As we said, when $\icr$ increases, $n$ decreases, and vice versa. However, since both terms in the above minimum also exhibit a linear dependence on $\icr$, there is typically a ``sweet spot" for $\icr$, which can easily be found, again in a fully adaptive way, as explained in Section \ref{ss:analysis}.



\noindent{\bf Dynamic Inventory. }
%
A natural extension to our problem is to dynamically allow new users and items over time.  Thus on a given trial we may or may not see a (single) new user, but the set of items $\mathcal{I}_t$ that we may recommend from on round $t$ is a superset of the items from previous round, i.e., $\mathcal{I}_{t-1}\subseteq \mathcal{I}_t$. Besides, there is no limit on the number of added items.  Conventionally, at the final round $T$ the set of distinct users is $[M]$ and distinct items is $[N]= \mathcal{I}_T$.  Our algorithms do not need to know $M$ and $N$ in advance.

For simplicity of presentation, we will only consider here the 
the perturbation-free case, 
but the same methodology can also be applied in the adversarially perturbed case. 
The notion of regret generalizes to this dynamic case as follows. 

For all trials $t = 1,\ldots, T$, let $\hta{t}$ be defined recursively as
$\hta{t} = 1 + \sum_{s < t} \indi{ i_s = i_t} \indi{\hta{s}\leq\hx{s}}$,
where $\hx{t}$ is the number of items in $\mathcal{I}_{t}$ that user $\ut{t}$ likes. The regret is then defined as
\be
\reg:=\sum_{\tim\in\na{\ntr}}(1-\lrel{\ut{\tim}}{\itt{\tim}})-\sum_{\tim\in\na{\ntr}}\indi{\hta{t}>\hx{t}}\,.
\ee
We note that with the above definition $\hta{t}$, the regret $R$ is again the difference between the number of mistakes made by Learner and those of an omniscient oracle. 
For this dynamic case, \algn\ enjoys {\em the same} regret guarantee as it did for the static case. However, its running time increases from $\mathcal{O}(\nma)$ to
$\mathcal{O}(\nma^2)$ per round. (More precisely, $\mathcal{O}(|\mathcal{I}_{T}|^2)$ per round, where $|\mathcal{I}|$ is the cardinality of set $\mathcal{I}$.)

\section{The Perturbation-Free Case}\label{s:noisyfree}
%
We start off by describing the basic algorithm
\algn\ which is designed to work in the absence of perturbation (that is, in the purely biclustered case) when the inventory is either static 
or dynamic. 
In the dynamic case the time complexity of the algorithm is $\mathcal{O}(|\mathcal{I}_{T}|^2) = \mathcal{O}(\nma^2)$ per trial. On the other hand, in the static case (i.e., when $\ist{\tim}=\na{\nma}$ for all $\tim\in\na{\ntr}$) the time complexity decreases to $\mathcal{O}(\nma)$ per trial. We shall analyze both the static and dynamic cases together.

\begin{algorithm}[!h] 
{\bf Initialization :}
$\lel\la0;$~~~
$\pos{\au}\la0$ for all $\au\in\nat$;\\
{\bf For} $t = 1, \ldots, T$ :
\begin{enumerate}
\item
$\cpo\la\pos{\ut{\tim}}$;
\item $\uco\leftarrow$ set of all items in $\ist{\tim}$ not recommended yet to $\ut{\tim}$;

\item \label{tp1} {\bf If} there exists an item $j_t\in\uco$ and a level $k\neq0$ with $k\leq\cpo$\,, $i_t\in\les{k}$ and $j_t\in \zi{k}$ 
{\bf then :}\\ 
%
\vspace{-0.4cm}
\begin{itemize}
    \item Select $j_t$;~~~~~{\bf If} $\lrel{\ut{\tim}}{\itt{\tim}}=0$ {\bf then} remove $\itt{\tim}$ from the item pool $\zi{k}$;
\end{itemize}
%
\item \label{tp2} \label{trt2} {\bf Else if} $\cpo\neq\lel$
{\bf then :}
\begin{itemize}
\item {\bf If} $\lir{\cpo+1}\in\uco$ select item $\itt{\tim}\la\lir{\cpo+1}$~{\bf else} select any item $j_t$ from $\uco$;
\item 
$\pos{\ut{\tim}}\la\cpo+1$;
\end{itemize}
\item \label{tp3} {\bf Else :} 
%
\begin{itemize}
\item Select item $\itt{\tim}$ uniformly at random from $\uco$;
\item {\bf If} $\lrel{\ut{\tim}}{\itt{\tim}}=1$ {\bf then}: \hfill\comm{Create new level} 
\begin{itemize}
\item 
$\lel\la\lel+1$;~~~~~$\pos{\ut{\tim}}\la\lel$;~~~~~$\lir{\lel}\la\itt{\tim}$;~~~~~$\zi{\lel}\la\nat$;
\hfill\comm{Define $\cru{\lel}:=i_t$}
\item Define $\les{\lel}$ to be the set of all users $\au\in\us$ with:\\
$\Rsh$ Only for {\bf \uca:}\,\,
$\lrel{\au}{\lir{\lii}}=\lrel{\ut{\tim}}{\lir{\lii}}~~\forall \ell' \leq \lel$\\
\raisebox{\depth}{\scalebox{1}[-1]{\(\Rsh\)}}~Only for {\bf \ica:}\,\,\,\,\,
$\lrel{\au}{\lir{\lel}}=1$
\end{itemize}
\end{itemize}
\end{enumerate}
\caption{One time recommendation algorithm (\algn).\label{a:orca}}
\end{algorithm}
\subsection{The One-Time Recommendation Algorithm \algn}\label{ss:orcbandit}
%
\algn's pseudocode is contained in Algorithm \ref{a:orca}, and its online functioning is briefly illustrated in Figure~\ref{f:1}.

\algn\, has two variants -- \algn-UC and \algn-IC, which exploit user and item clusters, respectively. The two variants share much of the same code. We will refer for brevity to them as \uca\ and \ica.
These algorithms will be designed in such a way that they can be fused together (into the resulting \algn) as follows. We run both \uca\ and \ica\ in parallel and maintain a $0/1$-flag.
On any round $t$, if the flag is set to $0$ then we select item $\itt{\tim}$ with \uca\ and update \uca. If the flag is set to $1$ we do so with \ica. The flag gets flipped if and only if $\lrel{\ut{\tim}}{\itt{\tim}}=0$. 
The two algorithms run in parallel and do not share any information, except for the items already recommended to each user.
 
\uca\ and \ica\ also share much of the same analysis. Hence, when we describe and analyze the two algorithms we are considering both at the same time, unless we state otherwise. The pseudo-code of the two algorithms only differ in the lines marked ``\uca" and ``\ica" at the very end of Algorithm \ref{a:orca}.

The algorithm(s) maintains over time a partitioning of the previously observed users into a sequence of user sets, that we call {\em levels}. Each observed user is initially assigned to the level $0$, and at any trial, 
can only move forward to the next level or stay in its current one. Informally, the higher is the level of a user, the more we know about her item preferences. Hence, with this method we can profile users based on their observed preferences. The current total number of levels is denoted by $\lel$, and can only increase over time, when a user belonging to level $\lel$ moves forward to a new level, thereby increasing by $1$ the value of $\lel$.
For all levels $\lii$, we denote by $\cru{\lii}$ the user $\ut{\tim}$ on the trial $\tim$ on which level $\lii$ gets created - in that $\lel$ becomes equal to $\lii$ (in Line \ref{tp3} of the pseudo-code). Each level $\lii$ is associated with a \emp{representative} item $\lir{\lii}$ and a set of users $\les{\lii}$. The only difference between \uca\ and \ica\ is how this set $\les{\lii}$ is defined:\footnote
{
Note that at initialization, we set $\pos{\au}\la0$ for all $i$, yet, level $0$ is not a real level, and the sets $\les{0}$ and $\zi{0}$ are dummy sets that are only meant to simplify the pseudo-code.
}
\begin{itemize}
\item In \uca, $\les{\lii}$ is the set of all users $\au$ in which $\lrel{\au}{\lir{\lij}}=\lrel{\cru{\lii}}{\lir{\lij}}$ for all $\lij\leq \lii$.
\item In \ica, $\les{\lii}$ is the set of all users $\au$ with $\lrel{\au}{\lir{\lii}}=1$.
\end{itemize}

We note that, for all levels $\ell'$\,, the set $\les{\ell'}$ need not (and cannot) be constructed explicitly. When a user $\au$ moves into a level $\lii$ -- in that $\pos{i}$ becomes equal to $\lii$ (during step \ref{tp2} of the pseudo-code in Algorithm \ref{a:orca}) -- we check to see if $i$ likes $\lir{\lii}$. This means that at any point in time we know, for all $\lij\leq\pos{i}$, whether or not $i\in\les{\lij}$.

For each level $\lii$ we maintain a \emp{pool} $\zi{\lii}$ which is the set of all items that we believe are liked by all users in $\les{\lii}$. In essence, the algorithm is biased towards the initial assumption that all users like all items, and then operates by keeping track of any bias violation to recover the biclustering structure. This is because we initialize $\zi{\lii}$ to be equal to $\nat$ (the universe of potential items). When we encounter a user $\au$ that we know to be in $\les{\lii}$ and that we know does not like an item $\ai$, we then know that $\ai$ is not liked by all users in $\les{\lii}$, and hence we remove it from $\zi{\lii}$.\footnote
{
To keep the algorithm simple and fast we actually do not always remove this item, but for the purposes of this discussion assume we do.
}

On any trial $t$ we call an item $j\in\ist{\tim}$ {\em recommendable} if it has not been recommended yet to user $i_t$ and there exists a level $\lii\neq 0$ not larger than\footnote{In the static inventory model the condition on $\lii$ in this definition is that it is {\em equal} to current level of $i_t$.} the current level of $i_t$, such that $i_t\in\les{\lii}$ and $j_t\in \zi{\lii}$.

\begin{figure}[ht]
\vspace{-1.3in}\hspace{-0.1in}
\begin{minipage}{0.5\textwidth}
\hspace{-0.8in}
            \begin{flushright}
			\includegraphics[width=7.8cm]{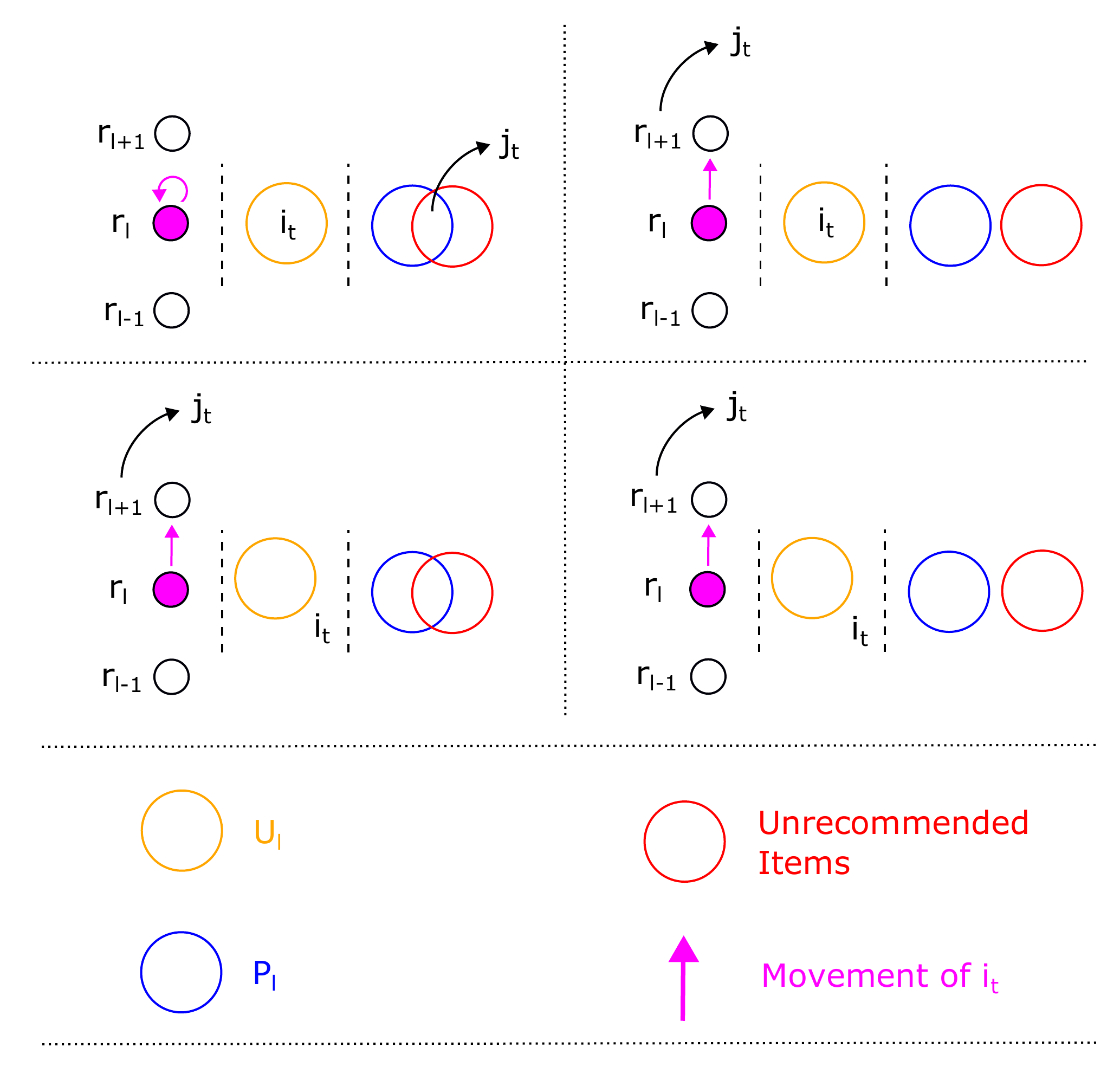}          
            \end{flushright}
\end{minipage}
\hspace{0.5cm}
\begin{minipage}{0.6\textwidth}
\vspace{1.0in}\hspace{-0.3in}
            \begin{flushleft}
			\includegraphics[width=7.2cm]{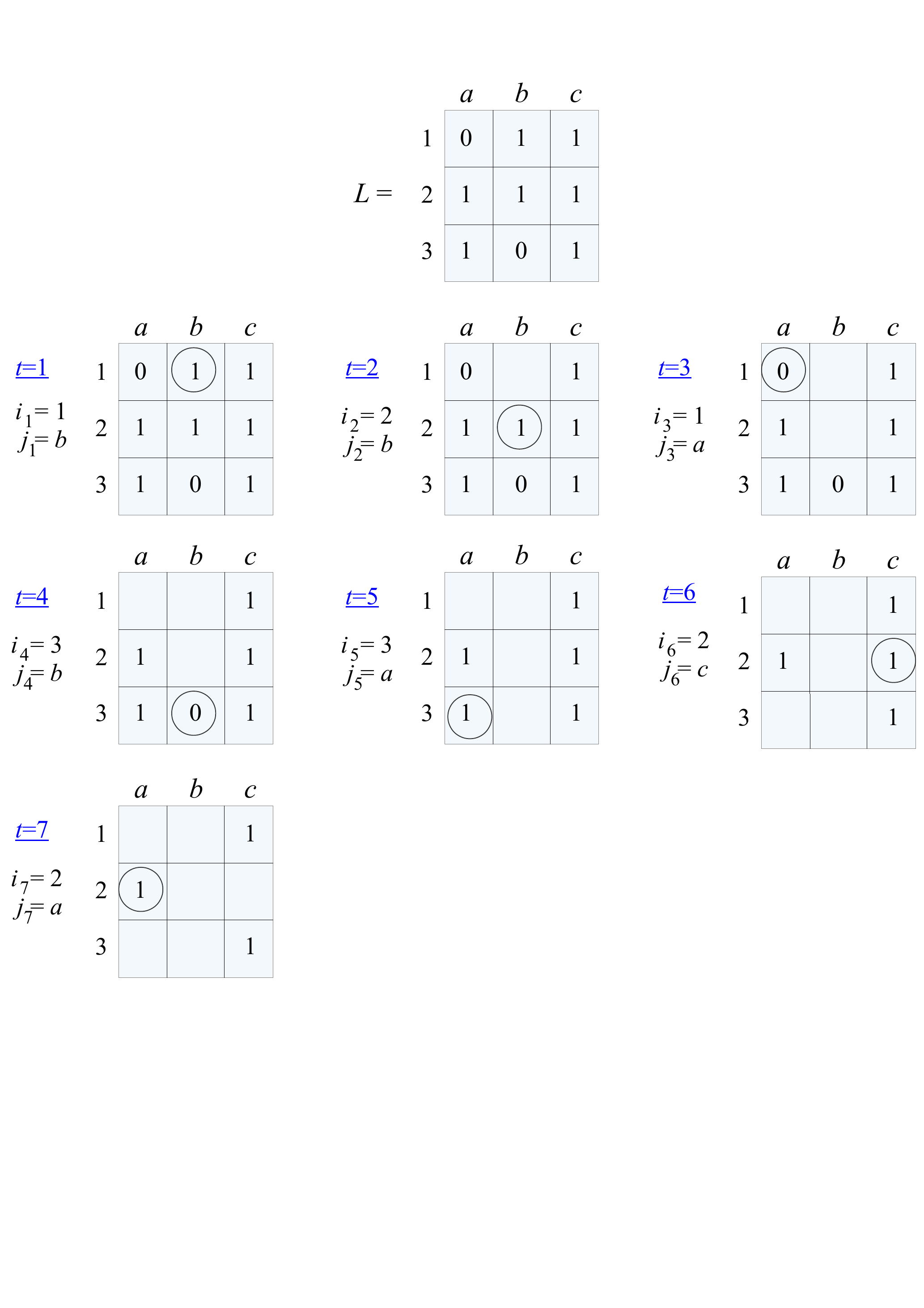}
			\end{flushleft}
\end{minipage}
\vspace{-1.0in}
\caption{{\bf Left:} The behavior of \algn\ (with a static inventory) at trial $t$ when $\ell:=\pos{\ut{\tim}}<\lel$. The four boxed diagrams on top represent the possible cases. In all cases, the yellow circle represents the set $\les{\ell}$, the red circle contains the set of so far unrecommended items for user $i_t$, and the blue circle represents set $\zi{\ell}$. The figure illustrates the movement of $i_t$ across levels. In the upper-left diagram the condition in Line \ref{tp1} is true. In all other diagrams the condition in Line \ref{tp2} is true. Note that in the upper-left diagram $j_t$ is removed from the blue set $\zi{\ell}$. The item $j_t$ is always removed from the red set of items not yet recommended to user $i_t$. When $\ell=\lel$ the upper-left diagram still applies.
{\bf Right:} An example run of 
\algn-UC with user set $\{1,2,3\}$, item set $\{a,b,c\}$, and a static inventory. 
\label{f:1}
}
%
\vspace{-0.28in}
\end{figure}

On any trial $t$ in which $\ell:=\pos{i_t}>0$\,, we recommend our item $\itt{\tim}$ and update the level of $i_t$ as follows. If there exists a recommendable item (i.e. the condition in Line \ref{tp1} is true) then we choose $\itt{\tim}$ to be such an item. If no such item exists (i.e. the condition in Line \ref{tp1} is false) then, given $\ell<\lel$ (so that the condition in Line \ref{tp2} is true), we move $\ut{\tim}$ up a level. We do so by first checking to see if $i_t$ likes the next level's representative item $\lir{\ell+1}$ (by choosing $j_t:=\lir{\ell+1}$ if it hasn't already been recommended this item) and then updating $\pos{\ut{\tim}}\la\ell+1$. Now suppose we want to move $i_t$ up a level but $\pos{\ut{\tim}}=\lel$ so there is no level to move up to (that is, the condition in Line \ref{tp3} is true). In this case, we draw $\itt{\tim}$ uniformly at random from those items in $\ist{\tim}$ not yet recommended to $\ut{\tim}$. If $\ut{\tim}$ likes $\itt{\tim}$ then we increment $\lel$ by one and then create a new level $\lel$ with $\lir{\lel}:=\itt{\tim}$ and $\cru{\lel}:=\ut{\tim}$.
\newline
\newline
\noindent{\bf Example run.}
For further clarity, in Figure \ref{f:1} (right) we give an example run of 
\algn-UC with user set $\{1,2,3\}$, item set $\{a,b,c\}$ and a static inventory. We now detail what happens on each trial:
\begin{itemize}
\vspace{-0.04in}
\item $t=1, i_1=1$. Since $\ell_1=0$ and $\ell^*=0$ the algorithm attempts to create a new level (level $1$) by sampling $j_1$ uniformly at random from $\mathcal{R}=\{a,b,c\}$. We draw $b$ so set $r_1=b$. Since $L_{1,b}=1$ user $1$ moves into level $1$ so now $\ell_1=1$ and $\ell^*=1$. We initialise $\mathcal{P}_1=\{a,b,c\}$. We note that if it had been the case that $L_{1,b}=0$ then user $1$ would have stayed at level $0$ and no new level would have been created.
\vspace{-0.07in}
\item $t=2, i_2=2$. Since $\ell_2=0$ and $\ell^*=1$, user $2$ is move into level $1$, and hence $j_2=r_1=b$, and now $\ell_2=1$.
\vspace{-0.07in}
\item $t=3, i_3=1$. We have $\ell_1=1$. Also $\mathcal{R}=\{a,c\}$ so $\mathcal{R}\cap\mathcal{P}_1\neq\emptyset$ and hence the algorithm chooses any $j_3$ from $\mathcal{R}\cap\mathcal{P}_1=\{a,c\}$. Say, we choose $j_3=a$. Since $L_{1,a}=0$ we remove $a$ from $\mathcal{P}_1$.
\vspace{-0.07in}
\item $t=4, i_4=3$. Since $\ell_3=0$ and $\ell^*=1$, user $3$ is move into level $1$, hence $j_4=r_1=b$, and now $\ell_3=1$.
\vspace{-0.07in}
\item $t=5, i_5=3$. Since $L_{3,r_1}=0$ we have that $3\notin\mathcal{U}_1$ and hence, since $\ell_3=1$ and $\ell^*=1$, the algorithm attempts to create a new level by sampling $j_5$ uniformly at random from $\mathcal{R}=\{a,c\}$. Say we draw $j_5=a$. Since $L_{3,a}=1$, a new level is created. Now $\ell^*=\ell_3=2$ and $r_2=j_5=a$. We also initialise $\mathcal{P}_2=\{a,b,c\}$. We note that if it had been the case that $L_{3,a}=0$ then user $3$ would have stayed at level $1$ and no new level would have been created.
\vspace{-0.07in}
\item $t=6, i_6=2$. We have $\ell_2=1$, $\mathcal{R}=\{a,c\}$ and $\mathcal{P}_1=\{b,c\}$. So $\mathcal{R}\cap\mathcal{P}_1=\{c\}$ and hence $j_6=c$.
\vspace{-0.07in}
\item $t=7, i_7=2$. We have $\ell_2=1$, $\mathcal{R}=\{a\}$ and $\mathcal{P}_1=\{b,c\}$. So $\mathcal{R}\cap\mathcal{P}_1=\emptyset$ and hence, since $\ell^*>\ell_2$, user $2$ is moved up to level $2$, so that $\ell_2=2$. Since $r_2=a$ and hence $r_2\in\mathcal{R}$ we choose $j_7=r_2=a$. We note that if it was the case that $r_2\notin\mathcal{R}$ then $j_t$ would have been chosen from $\mathcal{R}$.
\end{itemize}

\vspace{-0.05in}
\subsection{Analysis}
\vspace{-0.05in}
We analyze the properties of \algn\ by giving an upper bound on its regret as well as a matching lower bound on the regret of {\em any} algorithm (hence showing the optimality of \algn).
\begin{theorem}\label{thm:noisefree}
Let \algn\ be run on a $(C,D)$-biclustered matrix $\bs{L}$ of size $M\times N$ with an arbitrary sequence of users $i_1,\ldots, i_T$ and an arbitrary monotonically increasing sequence of item sets $\ist{1},\ldots, \ist{T}\subseteq[N]$. Then the expected regret of \algn\ is upper bounded as
\be
\expt{\reg} = \mathcal{O}(\min\{C,D\}(\nmu+\nma))~,
\ee
the expectation being over the internal randomization of the algorithm. \algn\ is parameter-free in that $C,D,M,N$ need not be known.
\end{theorem}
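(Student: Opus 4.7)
The plan is to reduce the claim to the analysis of the two sub-algorithms \uca\ and \ica\ separately, and then combine them via the flag-switching mechanism built into \algn. Specifically, I would establish (i) that \uca\ alone achieves expected regret $\bigo{C(\nmu+\nma)}$, (ii) that \ica\ alone achieves expected regret $\bigo{D(\nmu+\nma)}$, and (iii) that the flag-switch composition -- which flips on every mistake -- combines these into $\bigo{\min\{C,D\}(\nmu+\nma)}$. For (iii), let $m_1$ and $m_2$ denote the mistakes made during \uca-active and \ica-active trials of the combined run; since the flag flips exactly on mistakes, these counts alternate and satisfy $|m_1 - m_2| \leq 1$. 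Each of $m_1, m_2$ is bounded by its stand-alone regret bound, because an algorithm's internal state only depends on trials where it is active, while the shared ``recommended items'' set is consistent with the stand-alone run. Therefore $m_1+m_2 \leq 2\min\{m_1, m_2\}+1 = \bigo{\min\{C,D\}(\nmu+\nma)}$.

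For the single-variant analysis (I describe \uca; \ica\ is symmetric), the central quantity to control is the number of created levels $\lel$. Since the user-clustering assumption partitions users into at most $C$ equivalence classes, and each class has a unique profile on the chosen representatives $\lir{1},\ldots,\lir{\lel}$, I would argue $\lel \leq C$ (modulo careful handling of ``false positives'' within $\les{\ell}$ -- users matching the representative profile of the cluster of $\cru{\ell}$ while belonging to a different true cluster, which can cause extra pool removals). With $\lel \leq C$ in hand, learner mistakes decompose by which line of Algorithm~\ref{a:orca} was executed: mistakes in Line~\ref{tp1} correspond to removals from some $\zi{k}$, and are bounded by $\nma\,\lel = \bigo{\nma C}$; mistakes in Line~\ref{tp2} are bounded by the total number of level-advancements across all users, which is at most $\nmu\,\lel = \bigo{\nmu C}$ (each user advances at most $\lel$ times).

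The delicate component is bounding the expected number of mistakes arising from Line~\ref{tp3}. Each such execution is a uniform draw from $\uco$, and is either a ``success'' (creating one of the at most $\lel$ levels) or a mistake. The difficulty is that a user may land in Line~\ref{tp3} across many trials without sampling a liked item, and this count is inherently random. My plan is to attack it with a per-user, coupon-collector-style argument on the uniform sampling (without replacement) from $\uco$: for each user $i$, bound the expected number of Line~\ref{tp3} mistakes before either a liked item is drawn or $\uco$ becomes exhausted, and compare this count against the oracle's slack $(\nqu{i}-\nli{i})\indi{\nqu{i}\geq\nli{i}}$. Summing across users and clusters should then yield the additional $\bigo{(\nmu+\nma)C}$ term; the same bookkeeping covers the dynamic-inventory case because monotone growth of $\ist{t}$ only postpones -- but does not increase -- these random draws.

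The main obstacle lies precisely in this Line~\ref{tp3} analysis within the adversarial regime: one does not control when each user returns, whether they are still at the top level when they do, or how many new items have been injected into $\ist{t}$ in between. Making the per-user coupon-collector bound robust to these adversarial choices -- so that the resulting expectation can still be charged against $\nmu$, $\nma$, and the per-user oracle slack -- is where the technical heart of the proof lies, and is also what forces the algorithm to be randomized in the first place.
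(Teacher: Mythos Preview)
Your overall plan---bound the per-level mistakes by $\mathcal{O}(M+N)$ and then bound the final number of levels $\pfin$---matches the paper, and your flag-switching argument for combining \uca\ and \ica\ is essentially right. But you have misidentified the hard step and left a genuine gap in the one you treat as routine.

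The gap is your bound on $\pfin$ for \uca. The claim that $\lel\le C$ because ``each class has a unique profile on the representatives'' is false: distinct level-creators can come from the \emph{same} user cluster. With $C=2$ clusters $A$ (liking $\{1,2,3\}$) and $B$ (liking $\{1,4,5\}$) and $N=5$, one can force $\pfin=3$: an $A$-user creates level~1 with $\lir{1}=1$; $A$- and $B$-users (both in $\les{1}$) jointly empty $\zi{1}$; the same $A$-user then creates level~2; finally a fresh $B$-user, who is not in $\les{2}=A$, creates level~3. The ``false positives'' you flag parenthetically are precisely what break the argument, and they are not a detail---they are the whole difficulty. The paper handles this with a structural argument you did not anticipate: first a \emph{separation property} (if $\lii<\lij$ and $\cru{\lij}\in\les{\lii}$ then some user in $\les{\lii}$ dislikes $\lir{\lij}$), from which one derives a \emph{tree property} (any two sets $\les{\lii},\les{\lij}$ are nested or disjoint), and then a forest built on the $\les{\lii}$'s whose leaves are disjoint unions of user clusters, yielding $\pfin\le 2C$. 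For \ica\ the argument is not ``symmetric'' but strictly simpler: the separation property directly forces $\lir{\lii}\not\equiv\lir{\lij}$ for $\lii\neq\lij$, hence $\pfin\le D$. Conversely, Line~\ref{tp3} is \emph{not} the obstacle you describe. A one-line reduction dissolves your coupon-collector worries: discard every trial on which $\ist{t}$ contains no liked, not-yet-recommended item for $\ut{t}$---the oracle must err there too, so regret equals mistakes on the remaining trials. After this, every Line~\ref{tp3} draw succeeds with probability at least $1/N$, and the first success at level $\lii-1$ creates level $\lii$; hence at most $N$ expected such trials per level, with no per-user accounting against the oracle slack needed, and nothing special about the dynamic-inventory case.
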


\begin{proof} Here we sketch the proof, deferring the full proof to Appendix \ref{pth2}.
Without loss of generality we assume that on every trial $\tim$ there exists an item $\ai\in\ist{\tim}$ which $\ut{\tim}$ likes and has not been recommended to $\ut{\tim}$ before so that the regret equals the number of mistakes. Let $\pfin$ be the total number of levels created by Algorithm \ref{a:orca}.

Given a level $\ell'\in[\pfin]$ we count the following mistakes. There are at most $N$ mistakes made on trials corresponding to Line \ref{tp1} of Algorithm \ref{a:orca} with $k=\ell'$, at most $M$ mistakes on trials corresponding to Line \ref{tp2} with $\cpo=\ell'-1$ and, in expectation, at most $N$ mistakes made on trials corresponding to Line \ref{tp3} with $\cpo=\ell'-1$. This means that the total number of mistakes is $\mathcal{O}(\pfin(M+N))$. 

Next, we proceed to bounding $\pfin$. A crucial property here is what we call the \emp{separation property}, stating that given $\lii,\lij\in\na{\pfin}$ with $\lij>\lii$ and $\cru{\lij}\in\les{\lii}$ there exists some $\au\in\les{\lii}$ with $\lrel{\au}{\lir{\lij}}=0$. With this property in hand, we then analyze the algorithms \uca\ and \ica\ separately. 

In \ica\ the separation property implies that given distinct levels $\lii,\lij\in[\pfin]$, their representative items $\lir{\lii}$ and $\lir{\lij}$ are not equivalent. This directly implies that $\pfin\leq\ic$.

In \uca\ the separation property leads to what we call the \emp{tree property}, stating that given levels $\lii,\lij\in[\pfin]$ with $\lii<\lij$ we have either $\les{\lij}\cap\les{\lii}=\emptyset$ or $\les{\lij}\subset\les{\lii}$. We build a forest (whose nodes are sets) as follows. First, for every level $\lij\in[\pfin]$ we make $\les{\lij}$ a node. If there exists $\lii<\lij$ with $\les{\lij}\subset\les{\lii}$ then for the maximum such $\lii$ we make $\les{\lii}$ the parent of $\les{\lij}$. Otherwise $\les{\lij}$ has no parent. Finally, if there exists $\lii\in[\pfin]$ such that $\les{\lii}$ has a single child $\les{\lij}$ then we add the set $\les{\lii}\setminus\les{\lij}$ to the forest and make it a child of $\les{\lii}$. Figure 2 gives an example of this forest. By the tree property all leaves are disjoint and non-empty, which implies that there are at most $\uc$ leaves. Since each internal node of the forest has at least two children, and there are at least $\pfin$ nodes, this yields $\pfin\leq 2\uc$.

We have hence shown that the mistakes of \ica\ and \uca\ are $\mathcal{O}(\ic(M+N))$ and $\mathcal{O}(\uc(M+N))$, respectively. When combined together into \algn\ we hence get the desired bound.
\end{proof}

\begin{figure}[h]
\centering
\includegraphics[width=0.65\textwidth]{{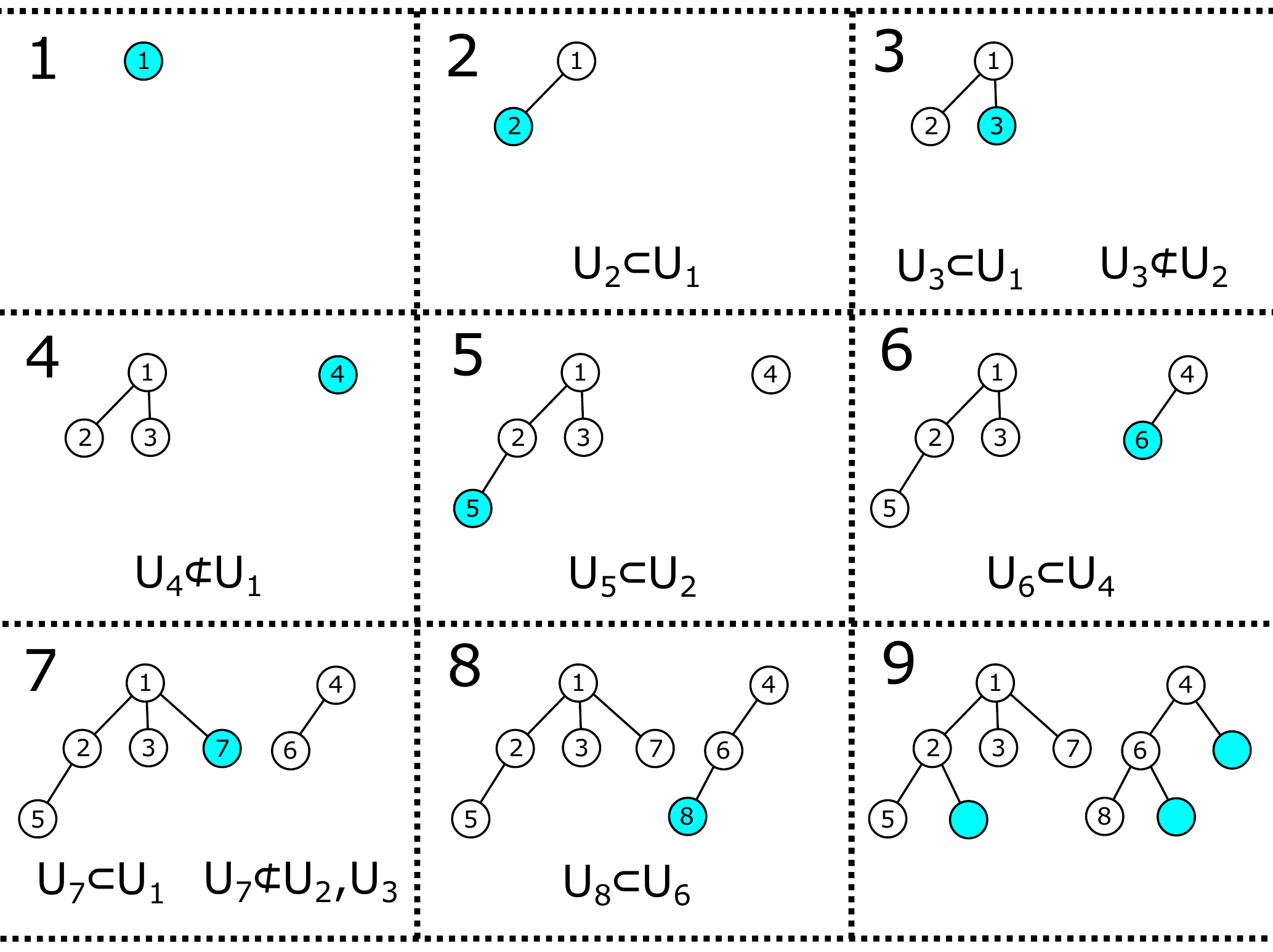}}
\caption{\label{treefig} An example of the construction of the forest structure of the sets in \algn-\uca\ with $\pfin=8$. The set corresponding to a node is a strict subset of that corresponding to its parent (if it exists) whilst the sets corresponding to nodes on different root-to-leaf paths are disjoint. For all $\lii\in[\pfin]$ the node numbered $\lii$ corresponds to the set $\les{\lii}$ and the $\lii$-th diagram depicts its construction (the blue node). The blue nodes in the 9-th diagram correspond to the sets $\les{2}\setminus\les{5}$\,,\, $\les{6}\setminus\les{8}$ and $\les{4}\setminus\les{6}$ which are all non-empty.} 
\end{figure}

\nc{\mcl}{E}
\nc{\vv}[1]{\bs{v}_{#1}}
\nc{\vc}[2]{v_{#1,#2}}
\nc{\xs}[1]{\mathcal{X}_{#1}}
\nc{\wv}[1]{\bs{w}_{#1}}
\nc{\wc}[2]{w_{#1,#2}}
\nc{\ys}[1]{\mathcal{W}_{#1}}
\nc{\ysp}{\mathcal{W}'}
\nc{\ws}{\mathcal{Y}}
\nc{\zs}[1]{\mathcal{Z}_{#1}}
\nc{\wst}[1]{\mathcal{Y}_{#1}}
\nc{\vs}{\mathcal{V}}
\nc{\bm}{\mu}
\nc{\s}{s}
\nc{\oc}{\theta}

As for the lower bound, we have the following result that proves the optimality of \algn\ in the static inventory model. 
As the static model is a special case of the dynamic one, this also proves optimality in the dynamic inventory model.
\begin{theorem}\label{thm:noisefree_lowerbound}
For any algorithm and any $M,N,C,D\in\nat$ with $\min(C,D)\leq\sqrt{\min(M,N)}$ there exists a $(C,D)$-biclustered matrix $\lmat$
of size $M\times N$, a time-horizon $T\in\nat$, and a user sequence $i_1,\ldots, i_T$ such that the algorithm has, in the static inventory model, a regret of 
\be
\Omega(\min\{C,D\}(M+N))~.
\ee
\end{theorem}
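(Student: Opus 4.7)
My plan is to construct a randomized family of hard instances and invoke Yao's minimax principle. Let $K=\min(C,D)$ and, by renaming, assume $K=C\leq D$; the hypothesis $K\leq\sqrt{\min(M,N)}$ gives $M\geq K^2$ and $N\geq K^2$, so the equal-sized partitions below exist. I partition $[M]$ into $K$ user groups $U_1,\dots,U_K$ of size $M/K$ and $[N]$ into $K$ item groups $I_1,\dots,I_K$ of size $N/K$, and define the block-diagonal matrix $L_{i,j}=\indi{\exists c: i\in U_c \text{ and } j\in I_c}$. This $\lmat$ has at most $K$ distinct rows and $K$ distinct columns, so it is $(K,K)$-biclustered, hence $(C,D)$-biclustered. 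The adversary draws the two partitions uniformly at random, and fixes a user schedule in which each user is served exactly $N/K$ times, yielding $T=MN/K$. Under any instance in the family every user likes exactly $\nli{i}=N/K=\nqu{i}$ items, so the omniscient oracle makes zero mistakes and the regret equals the learner's cumulative mistake count.

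By Yao's principle, it suffices to show that any deterministic algorithm incurs $\Omega(K(M+N))$ mistakes in expectation over the random instance; some matrix in the family then witnesses the bound. The core observation is that, by symmetry of the uniform balanced partition, before the algorithm has observed a hit linking user $i$ to any queried item (directly or through a chain of co-cluster constraints), the conditional distribution of $i$'s cluster label is uniform on $[K]$; thus any item chosen at that moment coincides in cluster with $i$ with probability exactly $1/K$, so the query is a miss with probability $(K-1)/K$. Resolving $i$'s cluster requires ruling out the $K-1$ wrong alternatives, and a natural charging argument pairs each elimination with a distinct miss involving $i$, yielding $\Omega(K)$ expected mistakes per user and hence $\Omega(KM)$ in total. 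A symmetric charging argument on items provides $\Omega(KN)$, and the larger of the two bounds is $\Omega(K(M+N))$.

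The hard part is formalizing the charging rigorously in the adaptive regime, since a single mistake can in principle eliminate cluster possibilities for several users and items at once, and a single hit is jointly informative about two parties. To control this, I would view the learner's state as a pair of partition refinements on users and items (hits merging equivalence classes, misses encoding pairwise disjointness constraints), and use a potential-function or entropy-compression argument to show that the expected reduction in the learner's cluster-label uncertainty per mistake is $O(1)$; since identifying every user's cluster requires eliminating $K-1$ alternatives and identifying every item's cluster likewise requires eliminating $K-1$ alternatives, the total number of mistakes must be $\Omega(K(M+N))$. The hypothesis $K\leq\sqrt{\min(M,N)}$ enters both to guarantee that equal-sized partitions with cluster sizes at least $K$ exist and to ensure $T=MN/K\geq K(M+N)$, so the horizon is large enough to accommodate the claimed number of mistakes.
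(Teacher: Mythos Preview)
Your approach differs from the paper's and, as written, has a genuine gap at the step you yourself flag as ``the hard part.''

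The paper does \emph{not} use a single random block-diagonal instance and Yao. It splits into two cases. For $M\geq N$ it partitions $[N]$ into $E:=\min(C,D)$ blocks, queries each user exactly $E$ times, and (via a uniform choice of that user's block) forces $\Omega(E)$ expected misses per user, giving $\Omega(EM)$. For $N\geq M$ it fixes a \emph{known} user partition into $E$ groups of size $E$ (taking $M=E^2$ w.l.o.g.), serves all users of group $a$ consecutively, and then \emph{adaptively} constructs the liked-item set $\mathcal{W}_a$ (disjoint from $\mathcal{W}_1,\dots,\mathcal{W}_{a-1}$) so that, until the learner has uncovered $N/(4E)$ items of $\mathcal{W}_a$, it is essentially searching in a pool of $N-(a-1)N/E$ items for a hidden set of size $N/E$; this yields $\Omega(N(E-a)/E)$ mistakes per group and $\Omega(EN)$ overall. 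The two cases together give $\Omega(E(M+N))$. Notice the two sides are treated \emph{asymmetrically}, reflecting that the learner picks items while the environment picks users.

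Your proposal stumbles on exactly this asymmetry. Two concrete issues:
\begin{itemize}
\item The sentence ``resolving $i$'s cluster requires ruling out the $K-1$ wrong alternatives, and a natural charging argument pairs each elimination with a distinct miss involving $i$'' is false as stated: a single \emph{hit} $(i,j)$ resolves $c(i)$ instantly (to $d(j)$), so there is no one-to-one pairing of eliminations with misses on $i$. The right statement is that, conditional on the adaptive history, the first hit on user $i$ takes $\Omega(K)$ recommendations in expectation; but this requires a careful conditioning argument that you only promise and do not carry out.
\item The ``symmetric charging argument on items provides $\Omega(KN)$'' is not symmetric at all. Since the learner \emph{chooses} items, it is under no obligation to probe any particular item $\Omega(K)$ times; the lower bound on the item side must instead come from the fact that each user is served $N/K$ times while liking only $N/K$ items, forcing the learner to \emph{identify} most of each $I_c$. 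Turning this into an $\Omega(KN)$ bound against an adaptive learner is exactly what is delicate, and it is why the paper abandons symmetry and uses the adaptive construction of the $\mathcal{W}_a$'s in the $N\geq M$ case. Your proposed ``potential-function or entropy-compression argument to show that the expected reduction in uncertainty per mistake is $O(1)$'' is the crux, and it is left entirely unexecuted; moreover a single miss $(i,j)$ can propagate constraints to many user/item equivalence classes at once, so the $O(1)$-per-mistake claim is not obvious.
\end{itemize}

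In short: your block-diagonal-plus-Yao route may well be completable, but the submitted proof is a plan, not a proof, and the two specific mechanisms you invoke (per-user elimination charging, user/item symmetry) do not hold as stated. The paper sidesteps all of this with the two-case adaptive construction above.
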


\section{The Adversarial Perturbation Case}\label{s:noisy}
%
We now turn to the problem of incorporating arbitrary perturbations of the biclustered matrix. For simplicity we will only consider the static inventory model, but note that the dynamic-inventory methodology from the perturbation-free case can be applied to this case also. Our algorithm \algnn\ only exploits similarities among items -- we leave it as an open problem to adapt our methodology in order to exploit similarities among users.

We give two algorithms, \algnn-\uie\ (\algnn\ with User Item Exclusion) and \algnn-\ue\ (\algnn\ with User Exclusion) which have expected regret bounds of
\[
\expt{\reg} = \bo{( D\icr+m+n)(M+N)}~~~~ \mbox{and}~~~~
\expt{\reg} = \bo{(D+n+m/\icr)(M+N\icr)}~,
\]
respectively. These algorithms can be fused together (into the algorithm \algnn) in the same way as we did for \algn-\uca\ and \algn-\ica, in order to obtain a best-of-both guarantee. These two algorithms
differ only by whether the instruction labelled ``\uie" in the pseudo-code of Algorithm \ref{a:orcastar} (check Step 7 therein) is included or not.

\begin{algorithm}[!h]
\vspace{0.2cm}
{\bf Input :} $ \psi \in \nat\setminus \{1\}$ \hfill\comm{The dependence on $\psi$ can be avoided -- see Section~\ref{ss:analysis}}\\
\noindent{\bf Initialization :}\qquad 
$\lel\la0$;~~~~~$\pos{\au}\la0$ for all $\au\in\us$;~~~~~$\exc, \exi \la\emptyset$;\\
%
{\bf For} $t = 1,\ldots, T$ :
\begin{enumerate}
\vspace{-0.03in}
\item $\cpo\la\pos{\ut{\tim}}$;
\vspace{-0.03in}
\item $\uco\leftarrow$ set of all items not recommended yet to $\ut{\tim}$;
\vspace{-0.03in}
\item \label{ntp-1} {\bf If} $\uco\cap\exi\neq\emptyset${\bf~~then }~select any $j_t$ from $\uco\cap\exi$; 
%
%
\vspace{-0.03in}
\item \label{ntp0} {\bf Else if} $\ut{\tim}\in\exc${\bf~~then } select any $\itt{\tim}$ from $\uco$;
%
%
\vspace{-0.03in}
\item \label{ntp1} {\bf Else If} $\cpo>0$ and $\lrel{\ut{\tim}}{\lir{\cpo}}=1$ and $\uco\cap\zi{\cpo}\neq\emptyset$ 
{\bf then :}
\begin{itemize}
\item Select any $j_t$ from $\uco\cap\zi{\cpo}$;
\item {\bf If} $\lrel{\ut{\tim}}{\itt{\tim}}=0${\bf~~then :}
\begin{itemize}
\item $\nnul{\cpo}{\itt{\tim}}\la\nnul{\cpo}{\itt{\tim}}+1$;~~~~~{\bf If} $\nnul{\cpo}{\itt{\tim}}>2\icr$ {\bf then} remove $\itt{\tim}$ from $\zi{\cpo}$;
\end{itemize}
\end{itemize}
\item \label{ntp2} {\bf Else if} $\cpo\neq\lel${\bf~~then :}
\begin{itemize}
\vspace{-0.03in}
\item {\bf If} $\lir{\cpo+1}\in\uco$ {\bf then} select $\itt{\tim}\la\lir{\cpo+1}$,\, {\bf Else} select any $\itt{\tim}$ from $\uco$;
\item $\pos{\ut{\tim}}\la\cpo+1$;
\end{itemize}
\item \label{ntp3} {\bf Else :} 
\begin{itemize}
\vspace{-0.1in}
\item Select $\itt{\tim}$ uniformly at random from $\uco$;~~~~~Select $\coin \sim $ Bernoulli($1/\icr$);
\item {\bf If} $\lrel{\ut{\tim}}{\itt{\tim}}=1$ and $\coin=0${\bf~~then :}
\begin{itemize}
\item Add $\ut{\tim}$ to $\exc$;~~~~~Only for {\bf \uiebold:} Add $\itt{\tim}$ to $\exi$;
\end{itemize} 
\item {\bf If} $\lrel{\ut{\tim}}{\itt{\tim}}=1$ and $\gamma = 1$ {\bf then :}\\ $~~~~~~\lel\la\lel+1$;~~~~~$\pos{\ut{\tim}}\la\lel$;~~~~~$\lir{\lel}\la\itt{\tim}$;~~~~~$\zi{\lel}\la\as$;~~~~~$\nnul{\lel}{\ai}\la 0\,\,\,\, \forall\ai\in\as$;
\end{itemize}
\end{enumerate}
\caption{One time recommendation algorithm for adversarial perturbation (\algnn).\label{a:orcastar}}
\end{algorithm}

\subsection{The One-Time Recommendation Algorithm \algnn}\label{ss:orcbanditstar}
\vspace{-0.05in}
\algnn\ is a modification of \algn-\ica\ from Algorithm \ref{a:orca}. Lines \ref{ntp1}, \ref{ntp2}, and \ref{ntp3} of the pseudo-code of \algnn\ (Algorithm \ref{a:orcastar}) correspond to Lines \ref{tp1}, \ref{tp2}, and \ref{tp3} of the pseudo-code of \algn. To arrive at \algnn\ we make the following changes to \algn-\ica:
\begin{itemize}
\vspace{-0.035in}
\item Although this isn't actually a fundamental change, since \algnn\ is in the static inventory model and we are only doing item clustering, the definition of a recommendable item simplifies: an item $j$ is recommendable at trial $t$ if and only if the current level $\ell$ of user $i_t$ is greater than zero, $j$ belongs to $\zi{\ell}$ and is not yet recommended to $i_t$\,, and $\lrel{\ut{\tim}}{\lir{\cpo}}=1$. The concept of a recommendable item features in Step \ref{ntp1}.
\item In \algn-\ica\ we removed an item $\ai$ from a set $\zi{\lii}$ as soon as we determined that there exists some user $\au\in\les{\lii}$ with $\lrel{\au}{\ai}=0$. In \algnn\ we only remove $\ai$ from the set $\zi{\lii}$ (in Step \ref{ntp1} of the pseudo-code in Algorithm \ref{a:orcastar}) when we determine that there exist over $2\icr$ such users. The variable $\nnul{\lii}{\ai}$ keeps track of how many of these users have been found so far. Note that the set $\les{\lii}$ is not explicitly defined in the pseudo-code of \algnn.
\item The next modification is rather counter-intuitive. In \algn-\ica\ (under the static inventory model) if $i_t$ is at level $\lel$ and there are no items in $\zi{\lel}$ 
that are not yet recommended to $i_t$, then we draw $j_t$ uniformly at random from the unrecommended items, and if $\lrel{i_t}{j_t}=1$ we create a new level. In \uie\ however, we only create a new level (in Step \ref{ntp3}) with probability $1/\icr$. Otherwise we \emp{exclude} $i_t$ and $j_t$. When a user $i$ is excluded, future items are recommended to it arbitrarily (in Step \ref{ntp0})  and trials $t$ in which $i_t=i$ no longer have an effect on the rest of the algorithm. When an item $j$ is excluded it will be recommended to all users as soon as possible (in Step \ref{ntp-1}). Excluded users and items are recorded in the sets $\exc$ and $\exi$, respectively. \ue\ differs slightly in that only users are excluded, not items.
\end{itemize}

\vspace{-0.05in}
\subsection{Analysis}\label{ss:analysis}
\vspace{-0.05in}
%
\begin{theorem}\label{t:orcastar}
Let \algnn\ be run with parameter $\psi \geq 2$ on an $M\times N$ matrix $\bs{L}$ and an arbitrary sequence of users $i_1,...,i_T$. Suppose that there exists a $(C,D)$-biclustered ground-truth matrix $\bs{L^*}$ that induces $m$ bad users and $n = n(\psi)$ bad items (recall Definition \ref{def:1}) on $\bs{L}$ . Then the expected regret of \algnn\ is upper bounded as
\begin{align*}
\expt{\reg} = \mathcal{O}\Bigl(\min\Bigl\{ (D\icr+m+n)(M+N),~ (D+n+m/\icr)(M+N\icr)\Bigl\}\Bigl)\,,
\end{align*}
the expectation being over the internal randomization of the algorithm. Note that $C$ and $D$ need not be known.
\end{theorem}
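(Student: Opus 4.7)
The plan is to analyze the two variants \algnn-\uie\ and \algnn-\ue\ separately, obtaining each of the two bounds appearing inside the $\min$, and then to observe that fusing them into \algnn\ by running both in parallel (analogously to the fusion of \algn-\uca\ and \algn-\ica\ in Theorem \ref{thm:noisefree}) yields the best-of-both guarantee.

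For each variant, I would decompose the total mistake count by branching on which line of Algorithm \ref{a:orcastar} is executed on each trial, then express every branch contribution in terms of the total number of levels $\Lambda$, the quantities $M, N, \psi, m, n$, and the expected cardinalities of the exclusion sets $\exc, \exi$. Line 6 (level promotion) contributes at most one mistake per (user, level) pair, hence $O(M\Lambda)$. Line 5 (pool recommendation) charges at most $2\psi$ mistakes per (level, item) pair before an item is ejected from $\zi{\ell}$, hence at most $O(\psi N \Lambda)$. Line 3 (excluded items, \uie\ only) contributes $O(M)$ per excluded item, while Line 4 (excluded users) contributes $O(N)$ per excluded user; I plan to show that only bad users produce non-negligible excess mistakes in Line 4, since the non-bad excluded users have enough likes that the oracle also runs out at a comparable rate, so the net Line 4 contribution is $O(mN)$. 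Line 7 itself contributes, in expectation, $O(N)$ mistakes per visit-until-exit for \uie\ and a $\psi$-times larger amount for \ue, summed across the expected number of exits from Line 7.

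The crux is a perturbed analogue of the separation property from the proof of Theorem \ref{thm:noisefree}. Specifically, thanks to the $2\psi$ threshold for item removal from a pool, two good items (those with perturbation at most $\psi$) that are equivalent in $\lmap$ cannot both become representatives of distinct levels, because the $2\psi$ slack absorbs the disagreements induced by perturbation. Hence the number of distinct good representatives is at most $D$ and the number of bad representatives is at most $n$. For \uie, item exclusion at Line 7 prevents a bad item from being resampled into a second level, and the Bernoulli coin couples exclusions to creations so that $|\exi| = O(\psi\Lambda)$ with $\Lambda = O(D + n + m)$; combining the branch contributions and redistributing the $\psi$ factors against the $(M+N)$ factor yields $O((D\psi + m + n)(M+N))$. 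For \ue, without item exclusion each attempt at Line 7 only creates a level with probability $1/\psi$, inflating the expected attempts per level (and hence the Line 7 cost) by a factor $\psi$ while allowing bad users to generate only $O(m/\psi)$ extra levels on average, which after an analogous rebalancing gives $O((D + n + m/\psi)(M+N\psi))$.

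The hard part will be making the perturbed separation property rigorous: precisely specifying which items can survive in which pools under an adversarial perturbation, and arguing that good items equivalent in $\lmap$ co-inhabit the pools so that no level is ever created twice for the same good equivalence class. A secondary difficulty is the expectation analysis of Line 7 under an arbitrary user ordering; I would handle this by conditioning on the algorithm's state at the start of each trial and applying a Wald-type identity to sum the geometric ``attempt-until-exit'' counts, combined with a careful charging argument attributing Line 3 and Line 4 mistakes to the specific Line 7 events that caused the corresponding exclusions.
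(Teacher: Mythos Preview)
Your high-level decomposition by line of Algorithm \ref{a:orcastar} matches the paper's, and the fusion of \uie\ and \ue\ is indeed how the $\min$ is obtained. However, two of your key intermediate claims are false, and a third ingredient is missing.

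First, the assertion that ``only bad users produce non-negligible excess mistakes in Line \ref{ntp0}'' is incorrect. Under the standard reduction $\omega_i\le\xi_i$ (so that regret equals mistakes), the oracle never ``runs out,'' and a \emph{good} user that gets excluded can still incur up to $N$ mistakes at Line \ref{ntp0}. The paper does not try to discount these mistakes via the oracle; instead it charges every Line \ref{ntp-1}/\ref{ntp0}/\ref{ntp3} mistake to the set $\fnd$ of trials where Line \ref{ntp3} fires with $L_{i_t,j_t}=1$, obtaining $\expt{\reg}=\bo{(M+\icr N)\,\expt{|\fnl|}+(M+N)\,\expt{|\fnd|}}$, and then bounds $|\fnd|$ and $|\fnl|$ directly.

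Second, your ``perturbed separation property'' --- that two good items equivalent in $\lmap$ cannot both become level representatives --- is false as a deterministic statement. A user $i$ with $\lrel{i}{r_s}=0$ but $\lrep{i}{r_s}=1$ (there can be up to $\icr$ such users for a good $r_s$) can skip past level $s$ without exhausting $\zi{s}$ and later create a second level with a representative from the same cluster. What the paper proves is subtler: once level $s$ with good representative $r_s\in\clus$ exists, any later trial $t\in\fnd$ with $j_t\in\clus$, $i_t$ good, and $\lrep{i_t}{j_t}=1$ must have $\lrel{i_t}{r_s}=0$; since $r_s$ is good this limits such users to at most $\icr$, and the Bernoulli coin then ensures only $O(1)$ of these trials land in $\fnl$ \emph{in expectation}. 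This is the real role of the coin --- not merely ``coupling $|\exi|$ to $\Lambda$.''

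Third, your outline never addresses trials $t\in\fnd$ with $\lrep{i_t}{j_t}=0$, which can create spurious levels even when $j_t$ is good. The paper splits these according to whether the number $\rho_t$ of remaining liked items for $i_t$ exceeds $2\ucp{i_t}$: if so, the conditional probability of drawing such a $j_t$ is below $1/2$ and these trials inflate the bound by only a constant; if not, then by Definition \ref{def:1} user $i_t$ is bad, and these trials are absorbed into the bad-user contribution. This is precisely where the definition of a bad user (via $\omega_i>\xi_i-2\ucp{i}$) enters the argument, and it cannot be bypassed.
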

In Appendix \ref{sa:doubling_trick} we show that a standard doubling trick can remove the parameter $\psi$ in
\algnn, allowing us to achieve a regret bound that is only an $\bo{\ln(M)}$ factor off the regret bound of \algnn\ (Theorem \ref{t:orcastar}) with $\icr$ therein replaced by the optimal $\icr$ in hindsight.
\newline

\begin{proof}
Here we sketch the proof, deferring the full proof to Appendix \ref{pth3}. 
Let us first consider \uie. As in the analysis of \algn, we assume without loss of generality that for all trials $t$ there exists an item that $i_t$ likes and that has not been recommended to $i_t$ so far. Following similar steps as in the analysis of \algn, we first bound the contribution of each level to the regret, which is now $\mathcal{O}(M+\icr N)$.

Let $\fnd$ be the set of trials $t$ in which Line \ref{ntp3} of Algorithm \ref{a:orcastar} is executed and $\lrel{i_t}{j_t}=1$. Let $\fnl$ be the set of trials $t\in\fnd$ in which $\coin=1$ on trial $t$. Note that on each trial in $\fnl$ a level is created and on each trial $t$ in $\fnd\setminus\fnl$ user $i_t$ and item $j_t$ are excluded, which contributes $\mathcal{O}(M+N)$ to the total regret.

Trials $t\in\fnl$ in which $\lrep{i_t}{j_t}=0$ can hurt the algorithm by creating an unnecessary level. Given a non-bad user $i$, trials $t\in\fnd$ with $i_t=i$ and $\lrep{i_t}{j_t}=0$ happen infrequently enough that we can effectively ignore their contribution to the regret. However, given a bad user $i$, trials $t\in\fnd$ in which $\lrep{i_t}{j_t}=0$ can be more likely to happen. Yet, on such a trial, with probability $1-1/\icr$ user $i_t$ becomes excluded which implies that bad users contribute, in expectation, only $\mathcal{O}(M+N)$ to the regret. Hence we need not be concerned with trials $t\in\fnd$ with $\lrep{i_t}{j_t}=0$. In a similar fashion, one can show that bad items contribute $\mathcal{O}(M+N)$ to the overall regret.

Let a \emp{cluster} $\clus$ be a set of items that are not bad and are all equivalent in the matrix $\lmap$. We will be interested in the contribution to the regret of trials $t\in\cls:=\{t\in\fnd~|~j_t\in\clus\}$ with $\lrep{i_t}{j_{t}}=1$. Define $\fst:=\min\cls\cap\fnl$. Note that there are, in expectation, at most $\icr$ trials in $\{t\in\cls~|~t<\fst\}$, and all such trials are in $\fnd\setminus\fnl$. Due to the fact that a level is created on trial $\fst$ (if $\fst$ is finite), and that its representative item $j_{\fst}$ is not bad, we can prove that for all trials $t\in\cls$ with $t>\fst$ we have $\lrel{i_t}{j_{\fst}}=0$. Since $j_{\fst}$ is not bad and $j_{\fst}$ and $j_t$ are equivalent in $\lmap$, there can be at most $\icr$ such $t$ in which $\lrep{i_t}{j_{t}}=1$. This implies that in expectation at most one of them will be in $\fnl$. Summing up, this allows us to conclude that trials $t\in\cls$ with $\lrep{i_t}{j_{t}}=1$ have an expected contribution to \algnn's regret of 
$\mathcal{O}(\icr(M+N))$.

Taking a sum over all clusters, bad users and bad items gives us the required regret bound. In \ue\ we can, without loss of generality, assume that there are no bad items. Noting the fact that each trial in $\fnd\setminus\fnl$ now only contributes $\mathcal{O}(N)$, the same analysis as \uie\ gives us the desired regret guarantee.
\end{proof}

\vspace{-0.05in}
\section{Experiments}\label{s:experiments}
\vspace{-0.03in}
We now report the results of a preliminary set of experiments comparing (versions of) \algn\  to common CF baselines.

\noindent{\bf Datasets.} 
The MovieLens dataset, produced by the GroupLens research team (\url{https://grouplens.org/datasets/movielens}) \citep{harper2015movielens}, is commonly used in recommender system studies.
It consists of 1M ratings in the range [1, 5], and we follow the convention used in previous research~(e.g., \citet{lim2015top}), where ratings above 3 are considered as positive feedback.


To test the algorithms under different conditions, we selected subsets of Movielens with increasing number of items. We selected $N$ items uniformly at random, with $N \in \{50, 100, 200\}$. Then we retained all the ``likes'' where the item is in $N$. The resulting average number of users $M$ and likes are
$3376 \pm 535$ and $7765 \pm 1870$ for $N= 50$ items,
$4488 \pm 301$ and $15886 \pm 2375$ for $N=100$,
$5290 \pm 191$ and $32123 \pm 4117$ 
for $N=200$.
In each experiment, to be sure we go through all likes, we ran $M \times N$ rounds of recommendations.

\begin{figure}[t!]
\vspace{-0.15in}
    \includegraphics[width=0.3\textwidth]{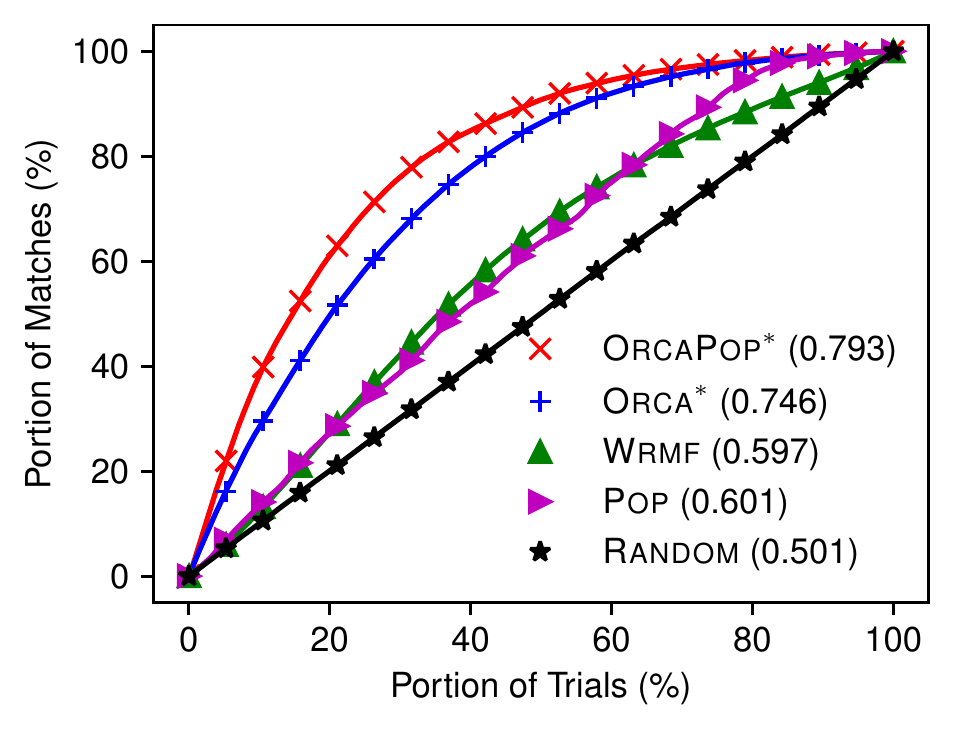}
    \hspace{0.12in}
    \includegraphics[width=0.3\textwidth]{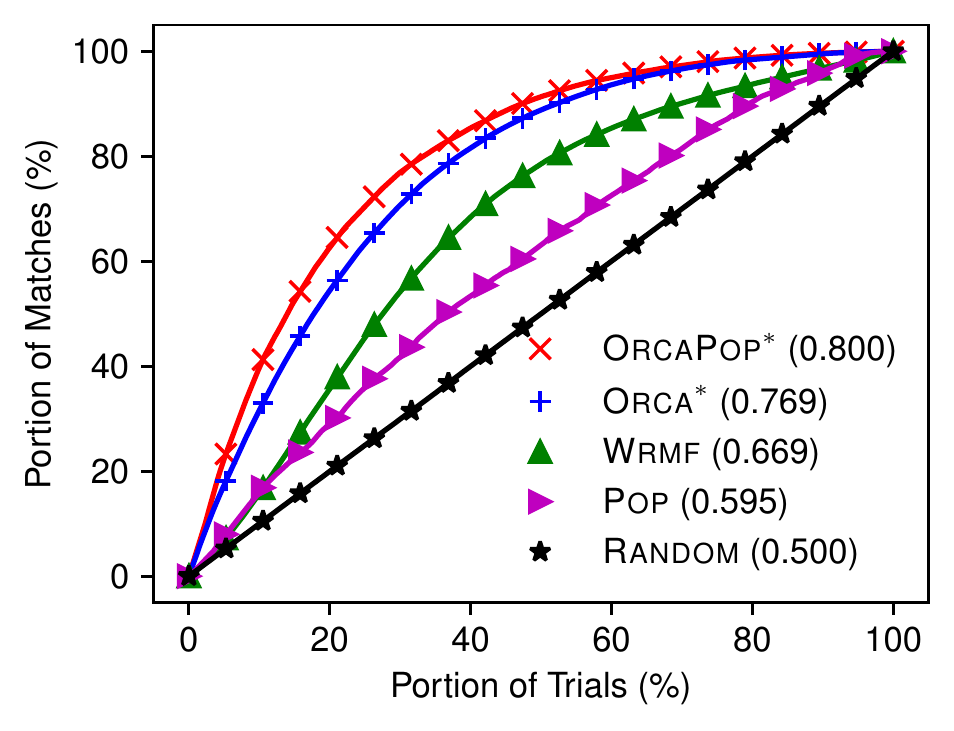}
    \hspace{0.12in}
    \includegraphics[width=0.3\textwidth]{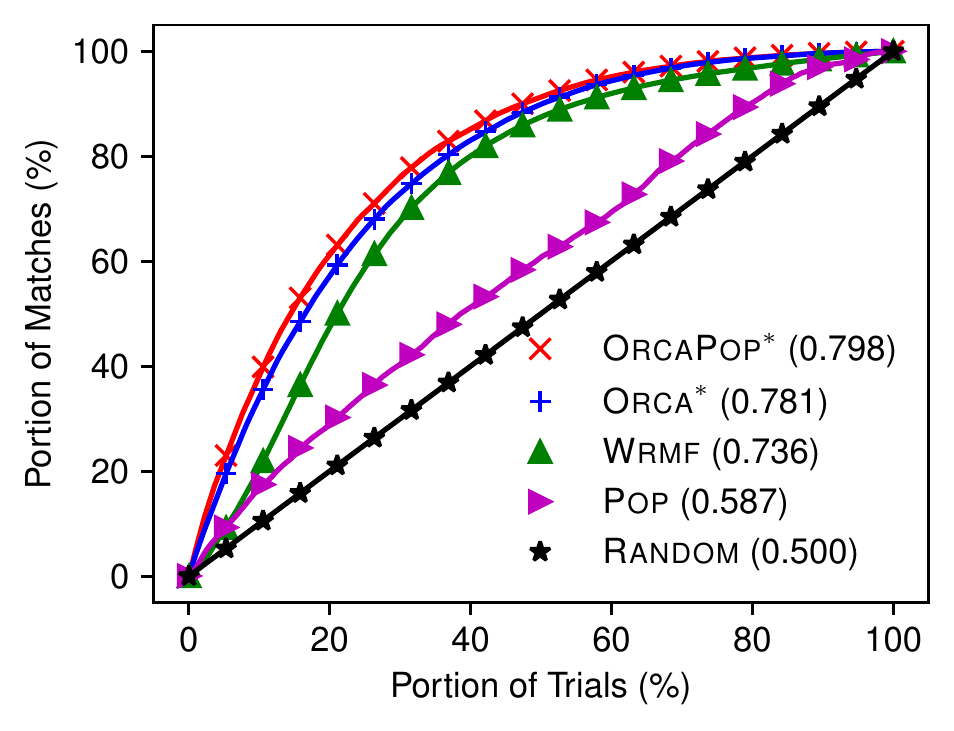}
    \vspace{-0.12in}
    \caption{Recommendation curves for subsets of MovieLens with varying number of items (from left to right: $N=50$, $N=100$, $N=200$ items). 
    The curves are averages across 30 repetitions. The numbers in braces give the area under each curve (the higher the better). 
    The displayed curves for \wrmf\ are the best performers across the number of latent factors.}
    \vspace{-0.3in}
    \label{ffig:results} 
\end{figure}

\noindent{\bf Algorithms and baselines.}
We use as baselines two naive methods and one based on Matrix Factorization (MF).
The first method, referred to as \random, just recommends an item at random (among the available ones for the current user). The second method, \pop\ (``popularity"), leverages the fact that, in MovieLens, item popularity is highly predictive of a successful recommendation, and we recommend the most popular among the items that have been recommended in the past.
Finally, as an MF approach, we used a well-known Matrix Factorization algorithm, specifically the Weighted Regularized Matrix Factorization (\wrmf) proposed by~\cite{koren:icdm08,pan+08}.
Since it was originally designed for batch recommendation, we adapted \wrmf\ to an online setting as follows. Initially, random recommendations are generated for the first 1,000 rounds, as \wrmf\ requires a minimum amount of ``likes" to perform MF. Subsequently, we fit the \wrmf\ model to produce recommendations for the next batch of 1,000 rounds. As the number of rounds increases, the model needs to be trained less frequently, as more data is needed to obtain a significant performance increase. Hence, to make this approach feasible for millions of rounds, the batch size is incrementally increased
by 10\% at each re-train phase.
This adjustment 
causes only a negligible difference in performance compared to full batch training at each round.
We relied on the 
\textsc{mrec} recommender system library (\url{https://mendeley.github.io/mrec/}), using default values for confidence weight ($1$) and regularization constant ($0.015$), and 15 iterations of alternating least squares, since performance was less sensible to these parameters. Instead, the number of latent factors
turned out to be very important for actual performance. We report the results for \wrmf\ with 4, 8, 16, and 32 latent factors.

We compared these baselines to\footnote
{
We decided not run \algn\ here, as the preference matrix is not $(C,D)$-biclustered (for some small $C$ and $D$). 
} 
\algnn.
Moreover, in order to show the versatility of \algnn, we also leverage the fact that popular items have higher probability of success, and in steps 3, 4, 5, 6 we replace \emph{select any $j_t$ from $\square$} with \emph{select the most popular $j_t$ from $\square$}. 
This modification does not affect the validity of our theoretical results. We call the resulting algorithm \orcapop*.
%
In all experiments, in order to avoid unfair comparisons with the baselines, the user sequence $i_1,\ldots, i_T$ is always generated uniformly at random over all available users.

\noindent{\bf Results. } 
All our experiments have been run on an Intel Xeon Gold 6312U - 24c/48t - 2.4 GHz/3.6 GHz with 256 GB ECC 3200 MHz memory.
%
Our results are contained in Figure \ref{ffig:results} and Tables \ref{tab:addlabel} and \ref{tab:addlabel2} (Appendix \ref{sa:further}). Figure \ref{ffig:results} plots, for each of the three values of $N$, the fraction of uncovered user-item matches (i.e., the ``likes" in the preference matrix $\lmat$) against the fraction of recommendations so far. The total number of recommendation (100\% in the $x$-axis) always corresponds to $M\times N$, that is, the total number of entries in $\lmat$. These curves quantify the pace at which the matches are uncovered by the different algorithms, hence the higher the better. A more compact version of this metric is the area under these curves, reported in both Figure \ref{ffig:results} and Table \ref{tab:addlabel}.

Though a bit preliminary in nature, some trends emerge:
The performance of \algnn\ and \orcapop* is consistently high even at small number of items, where MF algorithms suffer more due to the cold start problem.
At higher number of items, when cold start is less evident, \wrmf\ and \algnn\ have closer performances.
The \pop\ heuristic suffers from the subsampling of the items we made during preprocessing.

\vspace{-0.1in}
\section{Conclusions and Limitations}
\vspace{-0.05in}
We have considered a sequential content recommendation problem where items can only be recommended once to each user (``no-repetition constraint"). Unlike the abundant literature on MAB under low-rank and clustering assumptions, we have handled through a regret analysis the more general situation where users show up in the system in an arbitrary (possibly adversarial) order. We have proposed \algn, that works under biclustering assumptions, and have shown that this algorithm exhibits an optimal (up to constant factor) regret guarantee against an omniscient oracle that knows the user-item preference matrix ahead of time. 
%
We have then extended \algn\, to \algnn\,, a more robust version which is able to handle 
arbitrary preference matrices.
Finally, we have provided preliminary empirical evidence of the effectiveness of (versions of) our algorithm as compared to standard baselines.

Currently, \algnn\, is not able to leverage similarities among users. This is likely to require a substantial redesign of our algorithm.
%
Among the relevant extensions that would allow us to better address real-world scenarios are: i. The case where the learner has access to side information (i.e., features) about users and/or items, 
ii. The case where the user feedback is non-binary (e.g., relevance scores rather than clicks), and iii. Extending the biclustering assumption to the slightly more general low-rank assumption, ubiquitous in the CF literature.





\bibliography{ORCbibliography}

\begin{thebibliography}{44}
\providecommand{\natexlab}[1]{#1}
\providecommand{\url}[1]{\texttt{#1}}
\expandafter\ifx\csname urlstyle\endcsname\relax
  \providecommand{\doi}[1]{doi: #1}\else
  \providecommand{\doi}{doi: \begingroup \urlstyle{rm}\Url}\fi

\bibitem[Aditya et~al.(2011)Aditya, Dabeer, and Dey]{aditya2011channel}
ST~Aditya, Onkar Dabeer, and Bikash~Kumar Dey.
\newblock A channel coding perspective of collaborative filtering.
\newblock \emph{IEEE Transactions on Information Theory}, 57\penalty0
  (4):\penalty0 2327--2341, 2011.

\bibitem[Ariu et~al.(2020)Ariu, Ryu, Yun, and Prouti{\`e}re]{ariu2020regret}
Kaito Ariu, Narae Ryu, Se-Young Yun, and Alexandre Prouti{\`e}re.
\newblock Regret in online recommendation systems.
\newblock \emph{Advances in Neural Information Processing Systems},
  33:\penalty0 21141--21150, 2020.

\bibitem[Bellog{\'\i}n and Parapar(2012)]{bellogin2012using}
Alejandro Bellog{\'\i}n and Javier Parapar.
\newblock Using graph partitioning techniques for neighbour selection in
  user-based collaborative filtering.
\newblock In \emph{Proceedings of the sixth ACM conference on Recommender
  systems}, pages 213--216, 2012.

\bibitem[Biau et~al.(2010)Biau, Cadre, and Rouviere]{biau2010statistical}
G{\'e}rard Biau, Beno{\^\i}t Cadre, and Laurent Rouviere.
\newblock Statistical analysis of k-nearest neighbor collaborative
  recommendation.
\newblock \emph{The Annals of Statistics}, 38\penalty0 (3):\penalty0
  1568--1592, 2010.

\bibitem[Bresler and Karzand(2021)]{bresler2021regret}
Guy Bresler and Mina Karzand.
\newblock Regret bounds and regimes of optimality for user-user and item-item
  collaborative filtering.
\newblock \emph{IEEE Transactions on Information Theory}, 67\penalty0
  (6):\penalty0 4197--4222, 2021.

\bibitem[Bresler et~al.(2014)Bresler, Chen, and Shah]{bresler2014latent}
Guy Bresler, George~H Chen, and Devavrat Shah.
\newblock A latent source model for online collaborative filtering.
\newblock \emph{Advances in neural information processing systems}, 27, 2014.

\bibitem[Bresler et~al.(2016)Bresler, Shah, and
  Voloch]{bresler2016collaborative}
Guy Bresler, Devavrat Shah, and Luis~Filipe Voloch.
\newblock Collaborative filtering with low regret.
\newblock In \emph{Proceedings of the 2016 ACM SIGMETRICS International
  Conference on Measurement and Modeling of Computer Science}, pages 207--220,
  2016.

\bibitem[Bubeck et~al.(2012)Bubeck, Cesa-Bianchi, et~al.]{bubeck2012regret}
S{\'e}bastien Bubeck, Nicolo Cesa-Bianchi, et~al.
\newblock Regret analysis of stochastic and nonstochastic multi-armed bandit
  problems.
\newblock \emph{Foundations and Trends{\textregistered} in Machine Learning},
  5\penalty0 (1):\penalty0 1--122, 2012.

\bibitem[Bui et~al.(2012)Bui, Johari, and Mannor]{bui2012clustered}
Loc Bui, Ramesh Johari, and Shie Mannor.
\newblock Clustered bandits.
\newblock \emph{arXiv preprint arXiv:1206.4169}, 2012.

\bibitem[Candes and Recht(2012)]{candes2012exact}
Emmanuel Candes and Benjamin Recht.
\newblock Exact matrix completion via convex optimization.
\newblock \emph{Communications of the ACM}, 55\penalty0 (6):\penalty0 111--119,
  2012.

\bibitem[Dabeer(2013)]{dabeer2013adaptive}
Onkar Dabeer.
\newblock Adaptive collaborating filtering: The low noise regime.
\newblock In \emph{2013 IEEE International Symposium on Information Theory},
  pages 1197--1201. IEEE, 2013.

\bibitem[Das et~al.(2007)Das, Datar, Garg, and Rajaram]{das2007google}
Abhinandan~S Das, Mayur Datar, Ashutosh Garg, and Shyam Rajaram.
\newblock Google news personalization: scalable online collaborative filtering.
\newblock In \emph{Proceedings of the 16th international conference on World
  Wide Web}, pages 271--280, 2007.

\bibitem[Gentile et~al.(2014)Gentile, Li, and Zappella]{gentile2014online}
Claudio Gentile, Shuai Li, and Giovanni Zappella.
\newblock Online clustering of bandits.
\newblock In \emph{International Conference on Machine Learning}, pages
  757--765. PMLR, 2014.

\bibitem[Harper and Konstan(2015)]{harper2015movielens}
F~Maxwell Harper and Joseph~A Konstan.
\newblock The movielens datasets: History and context.
\newblock \emph{Acm transactions on interactive intelligent systems (tiis)},
  5\penalty0 (4):\penalty0 1--19, 2015.

\bibitem[Hartigan(1972)]{H72}
J.~A. Hartigan.
\newblock {Direct Clustering of a Data Matrix}.
\newblock \emph{Journal of the American Statistical Association}, 67\penalty0
  (337):\penalty0 123--129, 1972.
\newblock ISSN 01621459.
\newblock \doi{10.2307/2284710}.
\newblock URL \url{http://dx.doi.org/10.2307/2284710}.

\bibitem[Hazan et~al.(2012)Hazan, Kale, and Shalev-Shwartz]{hazan2012near}
Elad Hazan, Satyen Kale, and Shai Shalev-Shwartz.
\newblock Near-optimal algorithms for online matrix prediction.
\newblock In \emph{Conference on Learning Theory}, pages 38--1. JMLR Workshop
  and Conference Proceedings, 2012.

\bibitem[Herbster et~al.(2020)Herbster, Pasteris, and Tse]{herbster2020online}
Mark Herbster, Stephen Pasteris, and Lisa Tse.
\newblock Online matrix completion with side information.
\newblock \emph{Advances in Neural Information Processing Systems},
  33:\penalty0 20402--20414, 2020.

\bibitem[Hong et~al.(2020)Hong, Kveton, Zaheer, Chow, Ahmed, and
  Boutilier]{NEURIPS2020_9b7c8d13}
Joey Hong, Branislav Kveton, Manzil Zaheer, Yinlam Chow, Amr Ahmed, and Craig
  Boutilier.
\newblock Latent bandits revisited.
\newblock In \emph{Advances in Neural Information Processing Systems},
  volume~33, pages 13423--13433. Curran Associates, Inc., 2020.

\bibitem[Hu and Volinsky(2008)]{koren:icdm08}
Koren~Yehuda Hu, Yifan and Chris Volinsky.
\newblock Collaborative filtering for implicit feedback datasets.
\newblock In \emph{IEEE International Conference on Data Mining (ICDM 2008)},
  pages 263--272, 2008.

\bibitem[Jain et~al.(2013)Jain, Netrapalli, and Sanghavi]{jain2013low}
Prateek Jain, Praneeth Netrapalli, and Sujay Sanghavi.
\newblock Low-rank matrix completion using alternating minimization.
\newblock In \emph{Proceedings of the forty-fifth annual ACM symposium on
  Theory of computing}, pages 665--674, 2013.

\bibitem[Jedor et~al.(2019)Jedor, Perchet, and Louedec]{jedor2019categorized}
Matthieu Jedor, Vianney Perchet, and Jonathan Louedec.
\newblock Categorized bandits.
\newblock \emph{Advances in Neural Information Processing Systems}, 32, 2019.

\bibitem[Jun et~al.(2019)Jun, Willett, Wright, and Nowak]{pmlr-v97-jun19a}
Kwang-Sung Jun, Rebecca Willett, Stephen Wright, and Robert Nowak.
\newblock Bilinear bandits with low-rank structure.
\newblock In \emph{Proceedings of the 36th International Conference on Machine
  Learning}, volume~97 of \emph{Proceedings of Machine Learning Research},
  pages 3163--3172. PMLR, 2019.

\bibitem[Kang et~al.(2022)Kang, Hsieh, and Chun Man~Lee]{kang2022efficient}
Y.~Kang, C.J. Hsieh, and T.~Chun Man~Lee.
\newblock Efficient frameworks for generalized low-rank matrix bandit problems.
\newblock In \emph{Advances in Neural Information Processing Systems}. PMLR,
  2022.

\bibitem[Katariya et~al.(2017)Katariya, Kveton, Szepesvári, Vernade, and
  Wen]{katariya+17}
Sumeet Katariya, Branislav Kveton, Csaba Szepesvári, Claire Vernade, and Zheng
  Wen.
\newblock Bernoulli rank-$1$ bandits for click feedback.
\newblock In \emph{Proceedings of the Twenty-Sixth International Joint
  Conference on Artificial Intelligence (IJCAI-17)}, 2017.

\bibitem[Keshavan et~al.(2010)Keshavan, Montanari, and Oh]{keshavan2010matrix}
Raghunandan~H Keshavan, Andrea Montanari, and Sewoong Oh.
\newblock Matrix completion from a few entries.
\newblock \emph{IEEE transactions on information theory}, 56\penalty0
  (6):\penalty0 2980--2998, 2010.

\bibitem[Koren et~al.(2009)Koren, Bell, and Volinsky]{koren2009matrix}
Yehuda Koren, Robert Bell, and Chris Volinsky.
\newblock Matrix factorization techniques for recommender systems.
\newblock \emph{Computer}, 42\penalty0 (8):\penalty0 30--37, 2009.

\bibitem[Kwon et~al.(2017)Kwon, Perchet, and Vernade]{kwon2017sparse}
Joon Kwon, Vianney Perchet, and Claire Vernade.
\newblock Sparse stochastic bandits.
\newblock \emph{arXiv preprint arXiv:1706.01383}, 2017.

\bibitem[Lattimore and Szepesv{\'a}ri(2020)]{lattimore2020bandit}
Tor Lattimore and Csaba Szepesv{\'a}ri.
\newblock \emph{Bandit algorithms}.
\newblock Cambridge University Press, 2020.

\bibitem[Li et~al.(2016)Li, Karatzoglou, and Gentile]{10.1145/2911451.2911548}
Shuai Li, Alexandros Karatzoglou, and Claudio Gentile.
\newblock Collaborative filtering bandits.
\newblock In \emph{Association for Computing Machinery}, SIGIR '16, page
  539–548, New York, NY, USA, 2016.

\bibitem[Lim et~al.(2015)Lim, McAuley, and Lanckriet]{lim2015top}
Daryl Lim, Julian McAuley, and Gert Lanckriet.
\newblock Top-n recommendation with missing implicit feedback.
\newblock In \emph{Proceedings of the 9th ACM Conference on Recommender
  Systems}, pages 309--312, 2015.

\bibitem[Lu et~al.(2018)Lu, Wen, and Kveton]{10.1145/3240323.3240408}
Xiuyuan Lu, Zheng Wen, and Branislav Kveton.
\newblock Efficient online recommendation via low-rank ensemble sampling.
\newblock In \emph{Proceedings of the 12th ACM Conference on Recommender
  Systems}, RecSys '18, page 460–464. Association for Computing Machinery,
  2018.
\newblock ISBN 9781450359016.

\bibitem[Lu et~al.(2021)Lu, Meisami, and Tewari]{pmlr-v130-lu21a}
Yangyi Lu, Amirhossein Meisami, and Ambuj Tewari.
\newblock Low-rank generalized linear bandit problems.
\newblock In \emph{Proceedings of The 24th International Conference on
  Artificial Intelligence and Statistics}, volume 130 of \emph{Proceedings of
  Machine Learning Research}, pages 460--468. PMLR, 2021.

\bibitem[Maillard and Mannor(2014)]{maillard2014latent}
Odalric-Ambrym Maillard and Shie Mannor.
\newblock Latent bandits.
\newblock In \emph{International Conference on Machine Learning}, pages
  136--144. PMLR, 2014.

\bibitem[Negahban and Wainwright(2012)]{negahban2012restricted}
Sahand Negahban and Martin~J Wainwright.
\newblock Restricted strong convexity and weighted matrix completion: Optimal
  bounds with noise.
\newblock \emph{The Journal of Machine Learning Research}, 13\penalty0
  (1):\penalty0 1665--1697, 2012.

\bibitem[Pal et~al.(2023)Pal, Suggala, Shanmugam, and Jain]{pal2023optimal}
Soumyabrata Pal, Arun~Sai Suggala, Karthikeyan Shanmugam, and Prateek Jain.
\newblock Optimal algorithms for latent bandits with cluster structure.
\newblock \emph{arXiv preprint arXiv:2301.07040}, 2023.

\bibitem[Resnick and Varian(1997)]{resnick1997recommender}
Paul Resnick and Hal~R Varian.
\newblock Recommender systems.
\newblock \emph{Communications of the ACM}, 40\penalty0 (3):\penalty0 56--58,
  1997.

\bibitem[Rohde and Tsybakov(2011)]{rohde2011estimation}
Angelika Rohde and Alexandre~B Tsybakov.
\newblock Estimation of high-dimensional low-rank matrices.
\newblock \emph{The Annals of Statistics}, 39\penalty0 (2):\penalty0 887--930,
  2011.

\bibitem[Rong~Pan(2008)]{pan+08}
Bin Cao Nathan N. Liu Rajan Lukose Martin Scholz Qiang~Yang Rong~Pan,
  Yunhong~Zhou.
\newblock One-class collaborative filtering.
\newblock In \emph{Eighth IEEE International Conference on Data Mining}, 2008.

\bibitem[Sarwar et~al.(2001)Sarwar, Karypis, Konstan, and
  Riedl]{sarwar2001item}
Badrul Sarwar, George Karypis, Joseph Konstan, and John Riedl.
\newblock Item-based collaborative filtering recommendation algorithms.
\newblock In \emph{Proceedings of the 10th international conference on World
  Wide Web}, pages 285--295, 2001.

\bibitem[Slivkins et~al.(2013)Slivkins, Radlinski, and
  Gollapudi]{10.5555/2567709.2502595}
Aleksandrs Slivkins, Filip Radlinski, and Sreenivas Gollapudi.
\newblock Ranked bandits in metric spaces: Learning diverse rankings over large
  document collections.
\newblock \emph{J. Mach. Learn. Res.}, 14\penalty0 (1):\penalty0 399–436,
  2013.

\bibitem[Trinh et~al.(2020)Trinh, Kaufmann, Vernade, and
  Combes]{pmlr-v117-trinh20a}
Cindy Trinh, Emilie Kaufmann, Claire Vernade, and Richard Combes.
\newblock Solving bernoulli rank-one bandits with unimodal thompson sampling.
\newblock In \emph{Proceedings of the 31st International Conference on
  Algorithmic Learning Theory}, volume 117 of \emph{Proceedings of Machine
  Learning Research}, pages 862--889. PMLR, 2020.

\bibitem[Verstrepen and Goethals(2014)]{verstrepen2014unifying}
Koen Verstrepen and Bart Goethals.
\newblock Unifying nearest neighbors collaborative filtering.
\newblock In \emph{Proceedings of the 8th ACM Conference on Recommender
  systems}, pages 177--184, 2014.

\bibitem[Wang et~al.(2006)Wang, De~Vries, and Reinders]{wang2006unifying}
Jun Wang, Arjen~P De~Vries, and Marcel~JT Reinders.
\newblock Unifying user-based and item-based collaborative filtering approaches
  by similarity fusion.
\newblock In \emph{Proceedings of the 29th annual international ACM SIGIR
  conference on Research and development in information retrieval}, pages
  501--508, 2006.

\bibitem[Zhao et~al.(2013)Zhao, Zhang, and Wang]{zhao2013interactive}
Xiaoxue Zhao, Weinan Zhang, and Jun Wang.
\newblock Interactive collaborative filtering.
\newblock In \emph{Proceedings of the 22nd ACM international conference on
  Information \& Knowledge Management}, pages 1411--1420, 2013.

\end{thebibliography}

\appendix

\crefalias{section}{appendix} 

\section{Further Related Work}\label{sa:related}

Our problem also shares similarities with the classical problem of {\em Matrix Completion} (MC)~
see, e.g., the papers by~\cite{keshavan2010matrix, biau2010statistical, rohde2011estimation, aditya2011channel, candes2012exact, negahban2012restricted, jain2013low}. 
The objective in a MC problem is to estimate the remaining entries of a given matrix having at one's disposal a subset of the observed entries. It is often assumed that the matrix meets specific criteria (like low rank). Again, a closer inspection reveals that the typical conditions under which a MC algorithm work do not properly reflect the sequence of user events in a RS. For instance, whereas MC algorithms typically require the subset of the entries to be drawn at random according to some distribution, we are not bound to observe the matrix entries according to such a benign criterion, for users may visit a RS and revisit in an arbitrary order.
Furthermore, we impose differing structural assumptions, such as the adversarial perturbation of the preference matrix.
The closest MC problem to ours is when the components need to be predicted online~\citep{ hazan2012near, herbster2020online}. However, this problem is fundamentally different from ours in that on each trial a component of the preference matrix must be predicted instead of an item selected to be recommended to a given user. 






Matrix factorization and structured bandit formulations have often been used to frame the design of RS algorithms -- see, e.g., \cite{koren2009matrix, dabeer2013adaptive, zhao2013interactive,wang2006unifying, verstrepen2014unifying}, and references therein. The relevant literature on RS is abundant, and we can hardly do it justice here. Yet, we observe that most of the classical investigations on RS are experimental in nature, while those on structural bandits (e.g., \cite{bui2012clustered,maillard2014latent, gentile2014online,10.1145/2911451.2911548, kwon2017sparse, jedor2019categorized,katariya+17,10.1145/3240323.3240408,NEURIPS2020_9b7c8d13,pmlr-v97-jun19a,pmlr-v117-trinh20a,pmlr-v130-lu21a,kang2022efficient,pal2023optimal} or on MC \cite{keshavan2010matrix, biau2010statistical, rohde2011estimation, aditya2011channel, candes2012exact, hazan2012near, negahban2012restricted, jain2013low, herbster2020online} do not readily apply to our adversarial scenarios.

\section{Proofs}
This appendix contains the complete proofs of all our claims.

\subsection{Proof of Theorem \ref{thm:noisefree}}\label{pth2}
\begin{proof}
We shall show that \uca\ and \ica\ have expected regret bounds of 
\be
\expt{\reg}=\mathcal{O}(\uc(\nmu+\nma))~~~\operatorname{and}~~~\expt{\reg}=\mathcal{O}(\ic(\nmu+\nma))~,
\ee 
respectively. 

Let us assume that on every trial $\tim$ there exists an item $\ai\in\ist{\tim}$ which $\ut{\tim}$ likes and has not been recommended to $\ut{\tim}$ before. This is without loss of generality since on any trials in which this does not hold the omniscient oracle (when suggesting liked items before disliked items) is forced to make a mistake.

Let $\pfin$ be the value of $\lel$ on trial $\ntr$. Given a level $\lii\in\na{\pfin}$ we will bound the number of mistakes made on trials of the following types:
\begin{itemize}
\item Trials corresponding to Line \ref{tp1} (of the pseudocode in Algorithm \ref{a:orca}) with $\lse=\lii$. Given such a trial $\tim$, we have $\itt{\tim}\in\zi{\lii}$ and if a mistake is made $\itt{\tim}$ is removed from $\zi{\lii}$ so only one mistake can be made per item. Hence, no more than $\nma$ mistakes are made on such trials.
\item Trials corresponding to Line \ref{tp2} with $\cpo=\lii-1$. There is at most one such trial per user and hence no more than $\nmu$ mistakes are made on such trials.
\item Trials corresponding to Line \ref{tp3} with $\cpo=\lii-1$. On such a trial $\tim$, item $\itt{\tim}$ is selected uniformly at random from $\uco$. Since (by the initial assumption) there exists an item $\ai\in\uco$ with $\lrel{\ut{\tim}}{\ai}=1$, there are in expectation at most $\nma$ such trials $\tim$ until $\lrel{\ut{\tim}}{\itt{\tim}}=1$. Once this happens, there are no more trials of this type, and hence there are at most $\nma$ such trials in expectation.
\end{itemize}
Thus there are in expectation $\mathcal{O}(\nmu+\nma)$ mistakes in trials of the above types for each $\lii\in\na{\pfin}$, implying 
$$
\expt{\reg}=\mathcal{O}(\expt{\pfin}(\nmu+\nma))~.
$$
Therefore, all we now need to prove is that $\pfin=\mathcal{O}(\uc)$ and $\pfin=\mathcal{O}(\ic)$ for \uca\ and \ica, respectively.

To this effect, recall that given a level $\lii\in[\pfin]$ we denote by $\cru{\lii}$ the user that created that level (i.e. $\cru{\lii}:=i_t$ when $t$ is the trial on which level $\lii$ was created).

A crucial property needed to prove this is what we call the \emp{separation property}:

\begin{center}
Given $\lii,\lij\in\na{\pfin}$ with $\lij>\lii$ and $\cru{\lij}\in\les{\lii}$, there exists some $\au\in\les{\lii}$ with $\lrel{\au}{\lir{\lij}}=0$. 
\end{center}

To see why this property holds, first let $\tim$ be the trial on which $\cru{\lij}$ creates level $\lij$ (so $\cru{\lij}=\ut{\tim}$). Since $\lii<\lij$, on such round $\tim$ we have, direct from the algorithm, that either $\cru{\lij}\notin\les{\lii}$ or there is no item in $\zi{\lii}\cap\ist{\tim}$ that has not yet been recommended to $\cru{\lij}$. So since $\cru{\lij}\in\les{\lii}$, and since $\lir{\lij}$ is recommended to $\cru{\lij}$ on trial $\tim$ and hence not recommended to $\cru{\lij}$ before, we must have that $\lir{\lij}\notin\zi{\lii}$ on trial $\tim$. But for this to happen there must exist a user $\au\in\les{\lii}$ with $\lrel{\au}{\lir{\lij}}=0$, as claimed.

Now let the symbol $\eqi$ denote that two users or items are equivalent with respect to the matrix $\lmat$.

Let us first focus on \ica. Suppose, for contradiction, that we have $\lii,\lij\in\na{\pfin}$ with $\lij>\lii$ and $\lir{\lij}\eqi\lir{\lii}$. Since $\lrel{\cru{\lij}}{\lir{\lij}}=1$ we also have $\lrel{\cru{\lij}}{\lir{\lii}}=1$, and hence $\cru{\lij}\in\les{\lii}$. This implies, via the separation property, that there exists $\au\in\les{\lii}$ with $\lrel{\au}{\lir{\lij}}=0$, so choose such a $\au$. Since $\au\in\les{\lii}$ this gives $\lrel{\au}{\lir{\lii}}=1$, which contradicts the fact that $\lir{\lii}\eqi\lir{\lij}$. So all the representatives of the different levels come from different clusters which gives us $\pfin\leq\ic$, as required.

We now turn our attention to \uca. We will show that for all $\lii,\lij\in\na{\pfin}$ with $\lij>\lii$ we have either $\les{\lij}\cap\les{\lii}=\emptyset$ or $\les{\lij}\subset\les{\lii}$, where the subset property is strict (i.e., $\les{\lij}\neq\les{\lii}$). To show this, suppose that $\les{\lij}\cap\les{\lii}\neq\emptyset$. Choose $\au\in\les{\lij}\cap\les{\lii}$. Since $\au\in\les{\lii}$ we have $\lrel{\au}{\lir{\lii'}}=\lrel{\cru{\lii}}{\lir{\lii'}}$ for all $\lii'\in\na{\lii}$, and since $\au\in\les{\lij}$ we have $\lrel{\au}{\lir{\lii'}}=\lrel{\cru{\lij}}{\lir{\lii'}}$ for all $\lii'\in\na{\lij}$. Thus, since $\lij>\lii$ we have $\lrel{\cru{\lij}}{\lir{\lii'}}=\lrel{\au}{\lir{\lii'}}=\lrel{\cru{\lii}}{\lir{\lii'}}$ for all $\lii'\in\na{\lii}$, which in turn implies that $\cru{\lij}\in\les{\lii}$. Hence, for all $\au\in\les{\lij}$ and $\lii'\in\na{\lii}$ we have $\lrel{\au}{\lir{\lii'}}=\lrel{\cru{\lij}}{\lir{\lii'}}=\lrel{\cru{\lii}}{\lir{\lii'}}$ so that $\au\in\les{\lii}$. This implies that $\les{\lij}\subseteq\les{\lii}$. Hence, since $\cru{\lij}$ is trivially contained in $\les{\lij}$ it is also contained in $\les{\lii}$ which implies, by the separation property, that there exists a user $\au\in\les{\lii}$ with $\lrel{\au}{\lir{\lij}}=0$. Consider such an $\au$. Since (directly from the algorithm) we have $\lrel{\cru{\lij}}{\lir{\lij}}=1$ we then have $\lrel{\au}{\lir{\lij}}\neq\lrel{\cru{\lij}}{\lir{\lij}}$, so that $\au\notin\les{\lij}$. Hence $\les{\lij}\subseteq\les{\lii}$, and there exists $\au\in\les{\lii}\setminus\les{\lij}$ which implies that $\les{\lij}\subset\les{\lii}$, as required.

We have just shown that for all $\lii,\lij\in\na{\pfin}$ with $\lij>\lii$ we have either $\les{\lij}\cap\les{\lii}=\emptyset$ or $\les{\lij}\subset\les{\lii}$. We call this property the \emp{tree property}. We will now construct a directed graph (see Figure \ref{treefig} for an example), whose nodes are sets, as follows. For all $\lij\in\na{\pfin}$ we have that $\les{\lij}$ is a node in the graph and that:
\begin{itemize}
\item If there exists $\lii\in\na{\pfin}$ with $\les{\lij}\subset\les{\lii}$ then the (unique) parent of $\les{\lij}$ is $\les{\lii}$ for the maximum such $\lii$;
\item If there does not exist such a level $\lii$ then $\les{\lij}$ has no parent.
\end{itemize}

Note that, by the tree property, if $\les{\lii}$ is the parent of $\les{\lij}$ then (since $\les{\lij}\subset\les{\lii}$) we have $\lii<\lij$, thereby making the graph acyclic. Moreover, since each node has at most one parent the graph is a forest.

Suppose we have $\lii,\lij\in[\pfin]$ such that $\lij>\lii$ and that $\les{\lii}$ and $\les{\lij}$ are both roots. Since $\les{\lii}$ is a root we must have that $\les{\lii}\not\subset\les{\lij}$ so by the tree property  $\les{\lii}\cap\les{\lij}=\emptyset$ holds. Now suppose we have $\lii,\lij\in[\pfin]$ such that $\lij>\lii$ and that $\les{\lii}$ and $\les{\lij}$ are siblings. Let $\lii'$ be such that $\les{\lii'}$ is the parent of these siblings. We must have $\les{\lii}\subset\les{\lii'}$ so, again by the tree property, we have $\lii'<\lii$. We also must have that $\lii'$ is the maximum element $\lij'$ of $[\pfin]$ such that $\les{\lij}\subset\les{\lij'}$. These two properties imply that $\les{\lij}\not\subset\les{\lii}$ and hence, by the tree property,  $\les{\lii}\cap\les{\lij}=\emptyset$. This shows that any two roots, and any two siblings, correspond to disjoint sets.

For all $\lii,\lij\in[\pfin]$ such that $\les{\lij}$ is the only child of $\les{\lii}$, create a new node $\les{\lii}\setminus\les{\lij}$ and make it a child of $\les{\lii}$. Since, here, $\les{\lij}\subset\les{\lii}$, such new nodes are non-empty, and hence, because for all $\lii\in\na{\pfin}$ we have $\cru{\lii}\in\les{\lii}$, all nodes are non-empty. Note that the property that any pair of siblings or roots are disjoint still holds so, since any node is a subset of each of its ancestors, all leaves are disjoint. Also, for all $\lii\in\na{\pfin}$ and $\au\in\les{\lii}$ we have $\au'\in\les{\lii}$ for all $\au'\in\as$ with $\au'\eqi\au$. This implies that each leaf of the forest contains a user cluster as a subset and hence that $\uc$ is at least as large as the number of leaves. Since all the internal nodes of the forest have at least two children and the number of nodes in the forest is no less than $\pfin$, we must have at least $\pfin/2$ leaves. Hence $\pfin\leq2\uc$, as required. 
\end{proof}

\subsection{Proof of Theorem \ref{thm:noisefree_lowerbound}}\label{pth3}
\begin{proof}
Let $\mcl:=\min\{C,D\}$. We will construct our $M\times N$ matrix $\lmat$ such that it is both $\mcl$-user clustered and $\mcl$-item clustered, which implies $\lmat$ is also $(C,D)$-biclustered.

First consider the case that $M\geq N$. Without loss of generality, assume that $N$ is a multiple of $\mcl$. For all $a\in[\mcl]$ let $\vv{a}$ be the $N$-component vector such that for all $j\in[N]$ we have $\vc{a}{j}:=1$ if and only if 
$$
(a-1)(N/\mcl)<j\leq aN/\mcl~.
$$
Define $T:=M\mcl$ and for all $i\in[M]$ and $t\in[T]$ with $(i-1)\mcl<t\leq i\mcl$, let $i_t:=i$. For all $i\in[M]$ we will choose some $a_i\in[\mcl]$ in a way that is dependent on the algorithm and set the $i$-th row of $\lmat$ equal to $\vv{a_i}$. Note that we can always choose $a_i$ such that in expectation $\Omega(\mcl)$ mistakes are made in the $\mcl$ trials $t$ for which $i_t=i$. Since $N/E\geq E$, an omniscient oracle would make no mistakes, and hence the expected regret of the learner is equal to its expected number of mistakes, which is $\Omega(M\mcl)$ as required.

We now turn to the case that $N\geq M$. Without loss of generality, assume that $N$ is a multiple of $\mcl$ and assume $M=\mcl^2$ (since for any $i\in[M]$ with $i>\mcl^2$ we will be able to choose the $i$-th row of $\lmat$ arbitrarily).
For all $a\in[\mcl]$ define $\xs{a}$ to be the set of all $i\in[M]$ such that 
$$
(a-1)\mcl<i\leq a\mcl~.
$$
We will construct our matrix $\lmat$ so that for all $a\in[\mcl]$ there exists an $N$-component vector $\wv{a}$ such that for all $i\in\xs{a}$ the $i$-th row of $\lmat$ is equal to $\wv{a}$. Note that $\lmat$ will then be $\mcl$-user clustered as required. We set our time horizon $T:=NE$. Our user sequence is defined as follows. For all $i\in[M]$ we have that $i_t:=i$ for all $t\in[T]$ with 
$$
(i-1)N/\mcl<t\leq iN/\mcl~.
$$
For all $a\in[\mcl]$, let $\zs{a}$ be the set of trials $t\in[T]$ with $i_t\in\xs{a}$, noting that $|\zs{a}|=N$ and the trials in $\zs{a}$ come directly before those of $\zs{a+1}$.

We now turn to the construction of the vectors $\{\wv{a}~|~a\in[\mcl]\}$. To do so, we will construct, in order, a sequence of sets $\{\ys{a}~|~a\in[\mcl]\}$ where, for all $a,a'\in[\mcl]$ with $a'\neq a$, we have $\ys{a}\subseteq[N]$ and $|\ys{a}|=N/\mcl$ and $\ys{a'}\cap\ys{a}=\emptyset$. 

For all $a\in[\mcl]$ the vector $\wv{a}$ is defined so that for all $j\in[N]$ we have 
$$
\wc{a}{j}:=\indi{j\in\ys{a}}~.
$$
Suppose we have constructed $\ys{a'}$ for all $a'$ less than some $a\in[\mcl]$. Take an arbitrary set $\ysp\subseteq[N]$ with $|\ysp|=N/\mcl$ and $\ysp\cap\ys{a'}=\emptyset$ for all $a'\in[a-1]$. Suppose that $\ys{a}$ is set equal to $\ysp$ and the learning algorithm is run. Given some trial $t\in\zs{a}$, let $\wst{t}$ be the set of all items $j\in\ysp$ such that there exists a trial $t'\in\zs{a}$ with $t'<t$ and $j_{t'}=j$. Let $\s$ be the first trial in $\zs{a}$ in which 
$$
|\wst{\s}|=N/(4\mcl)
$$ 
(or $\max\zs{a}$ if no such $\s$ exists), and let $\vs$ be the set of all $t\in\zs{a}$ with $t<\s$.
The only trials $t\in\vs$ in which the algorithm (with knowledge of $\ys{a'}$ for all $a'\in[a-1]$) can be assured of not making a mistake are contained in the set of trials $t'\in\vs$ such that there exists an item $j\in\wst{\s}$ that has not been recommended to $i_{t'}$ before trial $t'$. Since $|\wst{\s}|\leq N/(4\mcl)$ and there are only $\mcl$ users $i$ in which $i_t=i$ for some $t\in\zs{a}$, there are at most $N/4$ trials in $\vs$ in which the algorithm is assured of not making a mistake. Similarly there are at most $N/4$ trials in $\vs$ in which no mistakes are made. As we shall see, $N/4$ is small enough that we can ignore such trials. On all other trials in $\vs$ the algorithm must search for an item in $\ysp$. Since $\ysp$ is an arbitrary subset of $N-(a-1)N/\mcl$ elements with cardinality $N/\mcl$ we can choose $\ysp$ in such a way that there are, in expectation,
$$
\Omega\Bigl(|\wst{\s}|(N-(a-1)N/\mcl)/(N/\mcl)\Bigl)=\Omega(N(\mcl-a)/\mcl)
$$ 
such trials in which mistakes are made. This is because (from above) there are at most $N/4$ trials in $\vs$ in which mistakes are not made, and there exists a constant $\oc$ such that 

$$
\oc N(\mcl-a)/\mcl+N/4 \leq N~,
$$ 
$N$ being the number of trials in $\zs{a}$.
 
Summing the above mistake lower-bounds over all $a\in[\mcl]$ gives us a total mistake bound of $\Omega(N\mcl)$. Since, for all $a\in[\mcl]$, we have $|\ys{a}|=N/\mcl$, and each user is queried $N/\mcl$ times, an omniscient oracle would make no mistakes so the expected regret is equal to the expected number of mistakes which is $\Omega(N\mcl)$ as claimed. Moreover, for any item $j\in[N]$, any $a\in[\mcl]$ and any user $i\in\xs{a}$ we have $\lrel{i}{j}=\indi{j\in\ys{a}}$ so since the sets $\{\ys{a}~|~a\in[\mcl]\}$ partition $[N]$ we have that $\lmat$ is $\mcl$-item clustered as required.
\end{proof}

\subsection{Proof of Theorem \ref{t:orcastar}}
\begin{proof}
We will first analyze \uie\ and then show how to modify the analysis for \ue.

Recall that for all users $i\in[M]$ the values $\nqu{i}$ and $\nli{i}$ are the number of times that user $i$ is queried and the number of items that user $i$ likes, respectively. We can assume without loss of generality that $\nqu{i}\leq\nli{i}$ for all users $i\in\na{M}$, so that the regret is the number of mistakes. This is because if, on some trial $t$, there is no item that $i_t$ likes and has not been recommended to $i_t$ so far, then on such a trial the omniscient oracle (when suggesting liked items before disliked items) would incur a mistake. Note that this assumption means that on every trial $t$ there exists an item $j$ that $i_t$ likes and has not been recommended to $i_t$ so far. This assumption also entails that the regret of \algnn\, is equal to its number of mistakes.

Let $\fnd$ be the set of trials $t$ in which Line \ref{ntp3} of the pseudocode in Algorithm \ref{a:orcastar} is invoked and $\lrel{i_t}{j_t}=1$. Let $\fnl$ be the set of trials $t\in\fnd$ in which $\coin=1$ on trial $t$. Note that on each trial in $\fnl$ a level is created, and hence $|\fnl|=\pfin$ where $\pfin$ the value of $\lel$ on the final trial $\ntr$.

We will now bound the expected number of mistakes in terms the cardinality of the above sets. To do this, we consider a trial $t$ in which a mistake is made. We have the following possibilities on trial $t$:
\begin{itemize}
\item The condition in Line \ref{ntp-1} of Algorithm \ref{a:orcastar} is true. In this case $j_t\in\exi$. For all $j\in\exi$ we have that $j$ was added to $\exi$ on some trial in $\fnd$, and we know that the number of rounds $t$ in which $j_t=j$ is bounded from above by $M$. Hence there can be at most $M|\fnd|$ such trials $t$.

\item The condition in Line \ref{ntp0} holds. In this case $i_t\in\exc$. For all $i\in\exc$ we have that $i$ was added to $\exc$ on some trial in $\fnd$, and we know that the number of trials $t$ in which $i_t=i$ is bounded from above above by $N$. Hence there can be at most $N|\fnd|$ such trials $t$.

\item The condition in Line \ref{ntp1} holds. Let $j:=j_t$ and let $\ell$ be the value of $\ell_{i_t}$ on trial $t$. We must have that $j_t\in\zi{\ell}$, and hence that 
$$
\nnul{\cpo}{\itt{\tim}}\leq 2\icr
$$
at the start of trial $t$. But $\nnul{\cpo}{\itt{\tim}}$ is increased by one on such a trial, which means there can be no more than $2\icr$ trials $t$ with $j_t=j$ and $\ell_{i_t}=\ell$. Note that each level $\ell\in[\pfin]$ is created on a trial in $\fnl$ so there are at most 
$$
2\icr N|\fnl|
$$ 
mistakes made on trials in which Line \ref{ntp1} applies.

\item The condition in Line \ref{ntp2} is true. For every level $\ell\in[\pfin]$ and every user $i\in[M]$ there is at most one such trial $t$ with $i_t=i$ and $\ell_{i}=\ell$ (on trial $t$). Hence there are no more than $M\pfin$ such trials in total. Note that each level $\ell\in[\pfin]$ is created on a trial in $\fnl$ so there are at most 
$$
M|\fnl|
$$
mistakes made on trials in which Line \ref{ntp2} applies.

\item The condition in Line \ref{ntp3} holds. Suppose $t'$ is a trial in which the condition in Line \ref{ntp3} is true but $\lrel{i_{t'}}{j_{t'}}=1$. This means that $t'\in\fnd$, so there cannot be more than $|\fnd|$ such trials $t'$. But given an arbitrary trial $t'$ in which that condition holds, the probability that $\lrel{i_{t'}}{j_{t'}}=1$ is at least $1/N$ (since $\nqu{i_{t'}}\leq\nli{i_{t'}}$). This implies that there are, in expectation, at most 
$$
N\expt{|\fnd|}
$$ 
trials in which Line \ref{ntp3} applies and a mistake is made. 
\end{itemize}

Putting together, we have so far shown that:
\be
\expt{\reg} = \bo{(M+\icr N)\expt{|\fnl|}+(M+N)\expt{|\fnd|}}~.
\ee 

Recall that given a user $i\in[M]$, its perturbation level $\ucp{i}$ is the number of items $j\in[N]$ in which $\lrel{i}{j}\neq\lrep{i}{j}$.
Given a trial $t\in\na{T}$, let $\rem{t}$ be the number of items that user $i_t$ likes and have not been recommended to them so far. Let $\fnb$ be the set of trials $t\in\fnd$ with $\lrep{i_t}{j_t}=0$ and $\rem{t}>2\ucp{i_t}$, and $\fnc$ be the set of trials $t\in\fnd$ with $\lrep{i_t}{j_t}=0$ and $\rem{t}\leq2\ucp{i_t}$. 

Let $\gi$ be the set of items which are \emp{good} (i.e., not bad). We call a non-empty set $\clus\subseteq\gi$ a \emp{cluster} if and only if for all pairs of items $j,j'\in\clus$ we have that the $j$-th and $j'$-th columns of $\lmap$ are identical and, in addition, for all items $j''\in\gi\setminus\clus$, the $j$-th and $j''$-th columns of $\lmap$ differ. Note that there are no more than $D$ clusters. Given a cluster $\clus$, we define $\cls$ to be the set of all $t\in\fnd$ with $j_t\in\clus$ and such that $i_t$ is \emp{good} (that is, not bad). 

Let us now focus on a specific cluster $\clus$ and define 
$$
\fst:=\min(\cls\cap\fnl)~,
$$ 
with the convention that the minimizer of the empty set is $\infty$. Let $\fsl$ be the level created on trial $\fst$. We then partition $\cls$ into the following sets:
\begin{itemize}
\item $\cl{1}$ is the set of all $t\in\cls$ with $t<\fst$;
\item $\cl{2}$ is the set of all $t\in\cls$ with $t\notin\fnb\cup\fnc\cup\fnl$ and $t\geq\fst$;
\item $\cl{3}$ is the set of all $t\in\cls\cap\fnl$ with $t\notin\fnb\cup\fnc$ (noting this implies $t\geq\fst$);
\item $\cl{4}$ is the set of all $t\in\cls\cap\fnb$ with $t\geq\fst$;
\item $\cl{5}$ is the set of all $t\in\cls\cap\fnc$ with $t\geq\fst$.
\end{itemize}

We will next analyze how much each of these sets contributes to the above regret bound.

Every $t\in\fnd$ has a $1/\icr$ probability of being in $\fnl$, which implies that the expected cardinality of $\cl{1}$ is at most $\icr$. Since each element of $\cl{1}$ is not in $\fnl$, it contributes $\bo{M+N}$ to the regret bound. Hence, the overall contribution of $\cl{1}$ to the regret bound is in expectation equal to 
$$
\bo{\icr (M+N)}~.
$$

We will now show that for all $j\in\clus$ we always have 
$$
\nnul{\fsl}{j}\leq2\icr~.
$$ 
To see this, take such a $j$ and suppose we have a round $t\in[T]$ in which $\nnul{\fsl}{j}$ is incremented. Note that on such a $t$ we necessarily have $\lrel{i_t}{\lir{\fsl}}=1$ and $\lrel{i_t}{j}=\lrel{i_t}{j_t}=0$. We have the following two possibilities:
\begin{itemize}
\item If $\lrep{i_t}{\lir{\fsl}}=0$ then since $\lrel{i_t}{\lir{\fsl}}=1$ we have $\lrep{i_t}{\lir{\fsl}}\neq\lrel{i_t}{\lir{\fsl}}$. Since $\lir{\fsl}=j_{\fst}$ and $\fst\in\cls$ we have that $\lir{\fsl}\in\clus$ so $\lir{\fsl}$ is good, and hence there can be no more than $\icr$ such trials.
\item If $\lrep{i_t}{\lir{\fsl}}=1$ then since $\lir{\fsl}=j_{\fst}\in\clus$ and $j\in\clus$ we have $\lrep{i_t}{j}=\lrep{i_t}{\lir{\fsl}}=1$. Since $\lrel{i_t}{j}=0$ we then have $\lrel{i_t}{j}\neq\lrep{i_t}{j}$. So, since $j$ is good, there can be no more that $\icr$ such trials.
\end{itemize}
This has proven our claim that for all $j\in\clus$ the inequality $\nnul{\fsl}{j}\leq2\icr$ holds deterministically.

We now analyze the cardinality of $\cl{2}$. To do this consider some arbitrary $t\in\cl{2}$. Since $t\in\cls$ we have $j_t\in\clus$. Since $t>\fst$ and $j_t$ was not recommended to $i_t$ before trial $t$ we must have that either $\lrel{i_t}{\lir{\fsl}}=0$ or $\nnul{\fsl}{j_t}>2\icr$ (at some point). But $j_t\in\clus$ so, by above, $\nnul{\fsl}{j_t}\leq2\icr$ always holds so we must have $\lrel{i_t}{\lir{\fsl}}=0$. As $t\notin\fnb\cup\fnc$ we have $\lrep{i_t}{j_t}=1$ so since $j_t,\lir{\fsl}\in\clus$ we have $\lrep{i_t}{\lir{\fsl}}=1$. This implies $\lrel{i_t}{\lir{\fsl}}\neq\lrep{i_t}{\lir{\fsl}}$ and hence there can be at most $\icr$ possible values of $i_t$.

We have just shown that the cardinality of $\{i_{t}~|~t\in\cl{2}\}$ is at most $\icr$. Now note that on any $t\in\cl{2}$ we have $t\notin\fnl$, so $i_t$ is added to $\exc$ and hence cannot be equal to $i_{t'}$ for any future trial $t'\in\cl{2}$ with $t'>t$. Hence, for each $t\in\cl{2}$ we have that $i_t$ is unique, so the cardinality of $\cl{2}$ is equal to that of $\{i_{t}~|~t\in\cl{2}\}$, which is at most $\icr$. Since each $t\in\cl{2}$ is in $\fnd$ but not $\fnl$ it contributes $\bo{M+N}$ to the regret, so that $\cl{2}$ contributes 
$$
\bo{\icr (M+N)}
$$ 
to \algnn's the regret bound.

Suppose we have a trial $t\in\cl{2}\cup\cl{3}\setminus\{\fst\}$. If $\coin=0$ on trial $t$ then $t\in\cl{2}$, while if $\coin=1$ we have $t\in\cl{3}$. Since the probability that $\coin=1$ is $1/\icr$ and 
$|\cl{2}|\leq\icr$ we have 
$$
|\cl{3}|=\bo{1+|\cl{2}|/\icr}=\bo{1}
$$ 
in expectation. Since each trial $t\in\cl{3}$ contributes $\bo{M+\icr N}$ to the regret bound, this allows us to conclude that in expectation $\cl{3}$ contributes 
$$
\bo{M+\icr N}
$$ 
to \algnn's regret bound.

We now argue that we can exclude the contributions of $\fnb$ (and hence also $\cl{4}$) to the regret bound. To this effect, suppose we have some $t\in\fnd$ with $\rem{t}>2\ucp{i_t}$. Note that $j_t$ is drawn uniformly at random from the items not yet recommended to $i_t$ so far, and $\lrel{i_t}{j_t}=1$. This implies that each of the $\rem{t}$ items $j$ that user $i_t$ likes and have not been recommended to $i_t$ so far have a $1/\rem{t}$ probability of being $j_t$. Since at most $\ucp{i_t}$ of these items $j$ satisfy $\lrep{i_t}{j_t}=0$ we have that there is at most a 
$$
\ucp{i_t}/\rem{t}<1/2
$$
probability that $\lrep{i_t}{j_t}=0$ (so that $t\in\fnb$). Hence, $|\fnb|$ affects the regret bound by a constant factor only, so that we can exclude the contribution of $\cl{4}$.


Finally, suppose we have some $t\in\cl{5}$. Note that $t\in\fnc$ which implies that $i_t$ is bad. This means that $t\notin\cls$ which leads to a contradiction. The set $\cl{5}$ is therefore empty, hence it does not contribute to the regret.

Hence, the set $\cls$ contributes $\bo{\icr(M+N)}$ to the regret. Since the union of $\cls$ over all clusters $\clus$ is equal to the set of all $t\in\fnd$ such that $i_t$ and $j_t$ are both good, we have shown that the total expected regret is bounded by $\bo{D\icr(M+N)}$ plus the contribution (to the regret) of all $t\in\fnd$ in which either $i_t$ or $j_t$ is bad. Hence, we now analyze the contribution to the regret of the latter kinds of rounds.


Consider some bad user $i$ or bad item $j$, and let $\fbd$ be the set of trials $t\in\fnd$ with $i_t=i$ or $j_t=j$, respectively. Note that given $t\in\fbd$ with $t\notin\fnl$, in trial $t$ it must happen that both $i_t$ is added to $\exc$ and $j_t$ is added to $\exi$. This means that there can be no further trials in $\fbd$, hence there is at most one trial in $\fbd\setminus\fnl$. This also implies that whenever we encounter a round $t\in\fbd$, there is a $1-1/\icr$ probability that there are no further trials in $\fbd$ and a $1/\icr$ probability that $t\in\fnl$. This implies that the expected number of trials in $\fbd\cap\fnl$ is bounded from above by 
$$
\sum_{a\in\nat}1/\icr^a = \frac{\psi}{\psi-1}-1\leq 2/\icr~,
$$ 
where the inequality uses the condition $\icr\geq 2$. Since trials in $\fbd\setminus\fnl$ contribute $\bo{M+N}$ to the regret, and trials in $\fbd\cap\fnl$ contribute $\bo{M+\icr N}$, this shows that in expectation $\fbd$ contributes overall 
$$
\bo{M+N}
$$ 
to the regret.

Since there are $m$ bad users and $n$ bad items, the above shows that the set of trials $t\in\fnd$ such that $i_t$ or $j_t$ is bad contributes $\bo{(m+n)(M+N)}$ to the regret. Hence, 
$$
\expt{\reg}=\bo{(D\icr+m+n)(M+N)}~,
$$ 
as claimed.

We now single out the analysis changes for \ue. First note that we can, without loss of generality, assume there are no bad items. This is because for any bad item $j$ we can modify $\lmap$ so that its $j$-th column is equal to that of $\lmat$ noting that $\lmap$ becomes $(D+n)$-item clustered and there are now no bad items.

Now observe that, since no items are ever added to $\exi$, the condition in Line \ref{ntp-1} of Algorithm \ref{a:orcastar} is never true, so our regret is now 
$$
\expt{\reg} = \bo{(M+\icr N)|\fnl|+N|\fnd|}
$$
so trials in $\fnd\setminus\fnl$ now only contribute $\bo{N}$ instead of $\bo{M+N}$. Since there are no bad items (so $n=0$ and we can ignore in the analysis the fact that items are never added to $\exi$) this change leads to a regret bound of the form 
$$
\expt{\reg}=\bo{(D+m/\icr)(M+N\icr)}~,
$$ 
as claimed.
\end{proof}

\subsection{Doubling trick}\label{sa:doubling_trick}
We briefly detail the doubling trick needed to get rid of parameter $\psi$. 

For each value of $\icr$ in $\{2^a~|~a\in\nat\,,\, a\leq \log_2(M)+1\}$ take an instance of \algnn\ with that parameter value. On any trial we predict with and update only one instance $a$. We stay with instance $a$ until a mistake is made. Once a mistake is made we set $a\la a+1$ modulo $\lfloor\log_2(M)+1\rfloor$. This method allows us to achieve a regret bound that is only an $\bo{\ln(M)}$ factor off the regret bound of \algnn\ (Theorem \ref{t:orcastar}) with $\icr$ therein replaced by the optimal $\icr$ in hindsight.

\section{Further Empirical Results}\label{sa:further}
Table \ref{tab:addlabel} contains the area under the learning curve for all algorithms we tested. Table \ref{tab:addlabel2} contains running times.

\begin{table}[h]
\caption{Area under the curve (multiplied by 100) for all the methods we tested on the three versions of the MovieLens dataset. Standard errors over 30 repetitions are shown. Each column is tagged by the number $N$ of randomly selected items, along with the resulting (average) number $M$ of users. In bold is the best performance on each dataset, which turned out to be \orcapop*'s in all experiments we ran.}
  \label{tab:addlabel}%
  \centering
    \begin{tabular}{llll}
    \toprule
     & $N=50$  & $N=100$  & $N=200$ \\
    Method & $M\approx 3376$ &$M\approx 4488$ &$M\approx 5290$\\
    \midrule
    \random    & 50.08$\pm$0.07 & 50.01$\pm$0.04 & 50.03$\pm$0.03 \\
    \pop       & 60.11$\pm$0.96 & 59.49$\pm$0.88 & 58.69$\pm$0.55 \\
    \algn-IC       & 77.04$\pm$0.56 & 77.61$\pm$0.46 & 77.82$\pm$0.30 \\
    \algnn     & 74.58$\pm$0.67 & 76.87$\pm$0.39 & 78.06$\pm$0.26 \\
    \orcapop*  & {\bf 79.34$\pm$0.56} & {\bf 80.02$\pm$0.42} & {\bf 79.77$\pm$0.30} \\
    \wrmf-4    & 59.71$\pm$0.27 & 66.94$\pm$0.26 & 73.64$\pm$0.22 \\
    \wrmf-8    & 57.02$\pm$0.24 & 64.76$\pm$0.23 & 72.99$\pm$0.20 \\
    \wrmf-16   & 53.10$\pm$0.18 & 61.11$\pm$0.23 & 70.52$\pm$0.20 \\
    \wrmf-32   & 48.73$\pm$0.38 & 55.11$\pm$0.26 & 65.98$\pm$0.19 \\
    \midrule
    \end{tabular}%
\end{table}%

\begin{table}[t]
\caption{Average execution time per round (in milliseconds) for all methods on three versions of the MovieLens dataset. Standard errors over 30 repetitions are shown. Each column is tagged by the number $N$ of randomly selected items, along with the resulting (average) number $M$ of users. \wrmf\ methods turn out to be 50 to 100 times slower.}
  \label{tab:addlabel2}%
  \centering
    \begin{tabular}{llll}
    \toprule
      & $N=50$  & $N=100$  & $N=200$ \\
    Method & $M\approx 3376$ &$M\approx 4488$ &$M\approx 5290$\\
    \midrule
    \random & 0.0038 $\pm$ 0.0007 & 0.0070 $\pm$ 0.0005 & 0.0102 $\pm$ 0.0005 \\
    \pop & 0.0072 $\pm$ 0.0016 & 0.0166 $\pm$ 0.0016 & 0.0295 $\pm$ 0.0019 \\
    \algn-IC & 0.0043 $\pm$ 0.0009 & 0.0083 $\pm$ 0.0008 & 0.0124 $\pm$ 0.0007 \\
    \algnn & 0.0037 $\pm$ 0.0008 & 0.0070 $\pm$ 0.0006 & 0.0093 $\pm$ 0.0007 \\
    \orcapop* & 0.0074 $\pm$ 0.0014 & 0.0177 $\pm$ 0.0014 & 0.0310 $\pm$ 0.0018 \\
    \wrmf-4 & 1.0552 $\pm$ 0.0938 & 1.0339 $\pm$ 0.0271 & 0.6993 $\pm$ 0.0150 \\
    \wrmf-8 & 1.0880 $\pm$ 0.0746 & 0.9921 $\pm$ 0.1143 & 0.6934 $\pm$ 0.0153 \\
    \wrmf-16 & 1.0449 $\pm$ 0.1052 & 1.0757 $\pm$ 0.0450 & 0.7061 $\pm$ 0.0175 \\
    \wrmf-32 & 1.0050 $\pm$ 0.0690 & 1.0559 $\pm$ 0.1020 & 0.7555 $\pm$ 0.0169 \\
    \midrule
    \end{tabular}%
\end{table}%

\end{document}